\documentclass[journal]{IEEEtran}

\ifCLASSINFOpdf
\else
   \usepackage[dvips]{graphicx}
\fi
\usepackage{url}

\usepackage{amsmath}
\usepackage{amssymb}
\usepackage{latexsym}
\usepackage{verbatim}
\usepackage{subfigure}
\usepackage{psfrag}
\usepackage{hyperref}

\usepackage{tcolorbox}
\definecolor{darkred}{RGB}{250,0,0}
\definecolor{darkgreen}{RGB}{0,150,0}
\definecolor{myblue}{RGB}{0,0,250}
\definecolor{darkblue}{RGB}{0,0,200}
\hypersetup{colorlinks=true, linkcolor=darkred, citecolor=myblue, urlcolor=darkblue}
\usepackage{amsthm}

\newtheorem{theorem}{Theorem}

\newtheorem{lemma}{Lemma}
\newtheorem{corollary}{Corollary}

\newtheorem{assumption}{Assumption}

{\begin{list}               
    {$\bullet$ \hfill}{
        \setlength{\leftmargin}{\parindent}
        \setlength{\parsep}{0.04\baselineskip}
        \setlength{\itemsep}{0.5\parsep}
        \setlength{\labelwidth}{\leftmargin}
        \setlength{\labelsep}{0em}}
    }
{\end{list}}

\providecommand{\cref}[1]{Chapter~\ref{chap:#1}}

\providecommand{\R}{\ensuremath{\mathbb{R}}}

\providecommand{\abs}[1]{\lvert#1\rvert}
\providecommand{\norm}[1]{\lVert#1\rVert}

\renewcommand{\vec}[1]{\ensuremath{\boldsymbol{#1}}}
\providecommand{\mat}[1]{\ensuremath{\boldsymbol{#1}}}

\providecommand{\calM}{\mathcal{M}}

 \providecommand{\mB}{\mat{B}}
\providecommand{\mC}{\mat{C}}

\providecommand{\mI}{\mat{I}}  
\providecommand{\mK}{\mat{K}} \providecommand{\mL}{\mat{L}} 
 \providecommand{\mP}{\mat{P}}

\providecommand{\mGm}{\mat{\Gamma}} \providecommand{\mG}{\mat{G}}
\providecommand{\mSig}{\mat{\Sigma}}

\providecommand{\va}{\vec{a}} \providecommand{\vb}{\vec{b}}
 
\providecommand{\vh}{\vec{h}}

\providecommand{\vq}{\vec{q}} \providecommand{\vs}{\vec{s}}
 \providecommand{\vr}{\vec{r}}
\providecommand{\vg}{\vec{g}}
\providecommand{\vu}{\vec{u}} \providecommand{\vw}{\vec{w}}
 
\providecommand{\vz}{\vec{z}} 
 
 \providecommand{\vv}{\vec{v}}

\providecommand{\vlambda}{\vec{\lambda}}
\providecommand{\veta}{\vec{\eta}}

\usepackage{lipsum}    
\usepackage{ifthen}
\usepackage{tcolorbox}
\newboolean{showcomments}
\setboolean{showcomments}{true}
\newcommand{\oussama}[1]{\ifthenelse{\boolean{showcomments}}
{ \textcolor{red}{(Oussama says:  #1)}}{}}
\newcommand{\christos}[1]{\ifthenelse{\boolean{showcomments}}
{ \textcolor{blue}{(Christos says: #1)} } {} }
\newcommand{\yue}[1]{\ifthenelse{\boolean{showcomments}}
{ \textcolor{magenta}{(Yue says:  #1)}}{}}

\providecommand{\abs}[1]{\lvert#1\rvert}
\providecommand{\norm}[1]{\lVert#1\rVert}

\renewcommand{\vec}[1]{\ensuremath{\boldsymbol{#1}}}
\providecommand{\mat}[1]{\ensuremath{\boldsymbol{#1}}}

\providecommand{\calM}{\mathcal{M}}

 \providecommand{\mB}{\mat{B}}
\providecommand{\mC}{\mat{C}}

\providecommand{\mI}{\mat{I}}  
\providecommand{\mB}{\mat{B}}  
 \providecommand{\mP}{\mat{P}}

\providecommand{\mGm}{\mat{\Gamma}} \providecommand{\mG}{\mat{G}}

\usepackage{balance}
\providecommand{\va}{\vec{a}} \providecommand{\vb}{\vec{b}}
 
\providecommand{\vh}{\vec{h}}

\providecommand{\vq}{\vec{q}} \providecommand{\vs}{\vec{s}}
 \providecommand{\vr}{\vec{r}}
 
\providecommand{\vg}{\vec{g}}
\providecommand{\vu}{\vec{u}} \providecommand{\vw}{\vec{w}}
 
\providecommand{\vz}{\vec{z}} 
 
 \providecommand{\vv}{\vec{v}}

\providecommand{\vlambda}{\vec{\lambda}}

\providecommand{\vbeta}{\vec{\beta}}

\newcommand{\argmin}{\operatornamewithlimits{argmin}}

\providecommand{\vxi}{\vec{\xi}}

\usepackage[utf8]{inputenc} 
\usepackage[T1]{fontenc}    
\usepackage{hyperref}       
\usepackage{url}            
\usepackage{booktabs}       
\usepackage{amsfonts}       
\usepackage{nicefrac}       
\usepackage{microtype}      
\usepackage{xcolor}         
\usepackage{cite} 
\usepackage{bbold}
\usepackage{graphicx} 
\usepackage{bmpsize}
\usepackage{amsmath,amsfonts}

\begin{document}

\title{Asymptotic Behavior of Multi--Task Learning: Implicit Regularization and Double Descent Effects}

\author{Ayed M. Alrashdi, Oussama Dhifallah, and Houssem Sifaou
\thanks{
}
\thanks{
(Corresponding author: A. M. Alrashdi.)}
\thanks{A. M. Alrashdi is with the Department of Electrical Engineering, College of Engineering, University of Ha'il, P.O. Box 2440, Ha'il, 81441, Saudi Arabia (e-mail: am.alrashdi@uoh.edu.sa).}
\thanks{O. Dhifallah was with the John A. Paulson School of Engineering and Applied Sciences, Harvard University, Cambridge, MA 02138, USA (e-mail: oussama\_dhifallah@g.harvard.edu).}
\thanks{H. Sifaou is with the Department of Electrical and Electronic Engineering, King’s College London – Strand, London WC2R 2LS – London, UK (e-mail: houssem.sifaou@kcl.ac.uk).}
}

\maketitle

\begin{abstract}
Multi--task learning seeks to improve the generalization error by leveraging the common information shared by multiple related tasks. One challenge in multi--task learning is identifying formulations capable of uncovering the common information shared between different but related tasks.
This paper provides a precise asymptotic analysis of a popular multi--task formulation associated with misspecified perceptron learning models.
The main contribution of this paper is to precisely determine the reasons behind the benefits gained from combining multiple related tasks. Specifically, we show that combining multiple tasks is asymptotically equivalent to a traditional formulation with  additional regularization terms that help improve the generalization performance. Another contribution is to empirically study the impact of combining tasks on the generalization error. In particular, we empirically show that the combination of multiple tasks postpones the double descent phenomenon and can mitigate it asymptotically.
\end{abstract}
\begin{IEEEkeywords}
Multi--task learning, high-dimensional analysis, generalization error, double descent, regularization.
\end{IEEEkeywords}

\IEEEpeerreviewmaketitle

\section{Introduction}\label{intro}
\subsection{Motivation} 
Multi--task learning \cite{tra_lrn_fr,smulti,tra_lrn} is a promising technique for improving generalization performance. It consists of leveraging common information shared among several related tasks to enhance the generalization performance associated with each individual task. 
One of the main challenges in multi--task learning is to identify learning formulations that can benefit each separate task. 
This paper considers a popular multi--task formulation \cite{muti_ref_f} associated with misspecified perception learning models (see equation~\eqref{multi_task}).
Recent literature \cite{csvm} shows that this formulation can identify the common information that may benefit individual tasks.
Specifically, it shows that this multi--task formulation leads to superior generalization performance than traditional formulations. 
This work provides a sharp asymptotic analysis of the multi--task setup described in \cite{muti_ref_f}. 
In particular, our analysis reveals an asymptotic equivalent formulation of the multi--task problem. 
The asymptotic predictions are then used to identify an equivalent formulation. Moreover, our analysis illustrates that the considered multi--task formulation is asymptotically equivalent to a traditional formulation with additional regularization terms that are the main cause of the generalization improvements.

Classical learning theory \cite{dd_1} suggests that the generalization error exhibits a U-shaped curve pattern. That is, the generalization first decreases until it reaches a minimum. Classical thinking identifies this region as the under-fitting regime. After the minimum, the learning model may over-fit which causes a poor performance on new data samples. 
\begin{figure}[h!]
    \centering
    \subfigure[]{
        \includegraphics[width=0.48\linewidth]{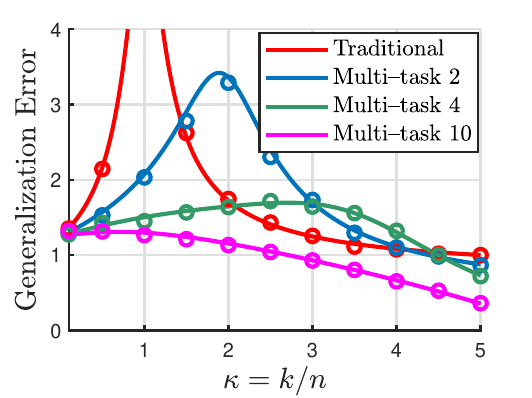}}
    \subfigure[]{
    \includegraphics[width=0.48\linewidth]{ 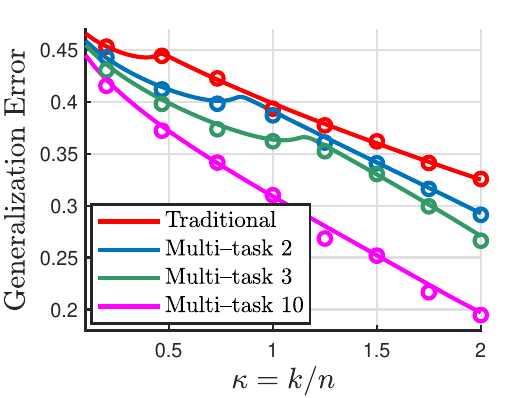}}
    \caption{Solid lines: Theoretical predictions. Circles: Numerical simulations for the multi--task formulation. {\bf (a)} A squared loss and a linear regression model. {\bf (b)} A logistic loss and a binary classification model.
    The results show a double descent pattern in the generalization error: the sweet spot is zero for the regression model and strictly positive for the classification model.
Note that the position of the interpolation threshold varies based on how many tasks are included. It is also evident that increasing the number of tasks contributes to improved generalization performance.
}
        \label{fintro2}
\end{figure}
In this regime, the generalization error is monotonically increasing as a function of the problem parameters. The objective is then to identify the location of the minimum known as the \textit{sweet spot}. Modern machine learning methods \cite{dd_1} violate this property. Instead, many machine learning methods follow what is known as the \textit{double descent curve} (see the references \cite{dd_1,BMM18,ddpap3}).
The generalization error of these models initially decreases, then increases until it hits a peak referred to as the \textit{interpolation threshold}. Beyond this peak, the generalization error declines monotonically with respect to the model parameters.
The study of such learning models has recently attracted significant attention since they violate classical results. Moreover, recent efforts \cite{dd_2,chis20bc,mu19harm} towards providing precise analysis of the double descent phenomenon focus on the single task problem. In particular, they present a theoretical understanding of the double descent phenomenon in regression and classification models as a function of $\kappa=k/n$. Here, $k$ is the number of parameters, and $n$ denotes the size of the training set. In this paper, we empirically study the impact of combining multiple related tasks on the behavior of the generalization error.
Our investigations indicate that the position of the interpolation threshold is influenced by the number of combined tasks. Moreover, the double descent effect can be reduced by aggregating a large number of tasks. Figure \ref{fintro2} illustrates these observations. It examines a regression model using the squared loss and a binary classification model using the logistic loss. In both cases, $T$ tasks are combined following the formulation in \cite{muti_ref_f}. The generalization error in both models exhibits a double descent pattern, with the interpolation threshold shifting to higher values as the number of tasks increases. Additionally, the results in Figure \ref{fintro2} indicate that combining a sufficiently large number of tasks can help mitigating the interpolation threshold.
\subsection{Summary of Contributions}
The main goal of this work is to provide a precise investigation of the effects of learning different yet related tasks, following the formulation presented in \cite{muti_ref_f}. Our analysis shows that the combination of multiple related tasks is asymptotically equivalent to a traditional formulation with an additional regularization term. The regularization is given in an explicit form. Moreover, we show that the additional regularization depends on the similarity between the tasks and helps improve the generalization performance. Our analysis starts by providing a sharp asymptotic analysis of the popular multi--task formulation introduced in \cite{muti_ref_f} for a fixed number of tasks $T$. Specifically, we show that the generalization error associated with the multi--task learning formulation concentrates in the large system limit for fixed $T$.
By solving a low--dimensional deterministic optimization problem, the asymptotic limit can be explicitly determined.
The resulting asymptotic predictions are subsequently employed to analyze the performance of the multi--task formulation in the regime where the number of combined tasks tends to infinity after the problem dimensions have been taken to infinity at a fixed rate.
The given analysis employs a Gaussian equivalence theorem known as the convex Gaussian min--max theorem (CGMT) \cite{chris:151}. 

Our precise characterization is valid for general convex loss functions, particularly a squared loss is used for regression tasks, and a logistic loss is employed for binary classification tasks.
Furthermore, it is valid for a broad class of generative models.
We specialize our general theoretical results to widely used regression and classification models. Empirical investigations show that the studied model exhibits a double descent phenomenon.
In particular, they demonstrate that the generalization error associated with the standard formulation is strictly decreasing after reaching the interpolation threshold at a value $\kappa^\star$. Additionally, they show that the combination of $T$ related tasks shifts the interpolation threshold by a factor that depends on $T$.
\subsection{Related Work}
The concept of multi--task learning \cite{tra_lrn_fr,smulti} is associated with various machine learning approaches, including transfer learning \cite{tr_learn1,tra_lrn}. These methods are similar in that they leverage information from different yet related tasks to enhance generalization error. The key difference lies in their objective: multi--task learning aims to improve the generalization performance across all learning models, whereas transfer learning focuses on using information from previously solved tasks to enhance the generalization error of a specific target task.
Recent efforts \cite{arg_07,tra_lrn,xue07a} consider modeling the relatedness between the tasks. For instance, the work in \cite{tra_lrn} models the relatedness between the tasks in terms of the correlation between the shared parameters. Another approach \cite{xue07a} models the relatedness in terms of the prior distribution of the shared parameters. A different line of work \cite{tra_lrn} focuses on providing formulations that can uncover the shared information between the tasks. This work precisely analyzes a generalized version of the popular multi--task formulation introduced in \cite{tra_lrn}. The approach in \cite{tra_lrn} provides a natural extension of the support vector machine to a multiple task setting. We extend this formulation to solve general linear regression and binary classification learning problems. 

While most research works focus on the practical aspects of the multi--task setting \cite{p1_mtask,p2_mtask,p3_mtask}, there have been several studies \cite{cter_svm,t1_mtask,t2_mtask,tiomoko20} that focus on providing precise performance analysis. Our work is particularly related to the analysis in \cite{tiomoko20}, which considers the least square support vector machine formulation. Compared to \cite{tiomoko20}, our contribution differs as follows. The analysis presented in this work is more general as it is valid for general convex formulations. In addition, this paper provides a precise characterization of the regularization effects of the multi--task formulation in \cite{tra_lrn} and examines the impact of task combination on the double descent phenomenon.  

The analysis presented in this paper is aligned with recent literature on the precise high--dimensional analysis of convex regression formulations \cite{chris:151,ouss19,dhif_trf,9698222, sifaou2021precise,alrashdi2019precise} and convex classification formulations \cite{ea19bc,dhifallah2020,chis20bc}. A common tool used in this research direction is the convex Gaussian min--max theorem (CGMT) \cite{gordon,stojnic13,chris:151, akhtiamov2025novel}. In this paper, we analyze the multi--task formulation in \cite{tra_lrn} associated with misspecified perceptron learning models using an extended version of the CGMT. A closely related work is the analysis presented in \cite{chris:151}, which uses the CGMT framework to precisely analyze a general convex regression formulation with possible inseparable loss function and regularization. Compared to \cite{chris:151}, our contribution differs as follows. The analysis presented in \cite{chris:151} assumes that the input feature vectors form a Gaussian--distributed matrix with independent and identically distributed (i.i.d) components. In this paper, the input vectors from different tasks comprise a block diagonal matrix where the diagonal blocks are Gaussian with i.i.d elements, and the off-diagonal blocks are zero. In this case, the analysis presented in \cite{chris:151} is not applicable. We essentially need an extended version of the CGMT that is called the multivariate CGMT \cite{dhi21inherent, akhtiamov2025novel}.
\subsection{ Notations} 
In our notation, column vectors are expressed using bold lower-case letters (e.g., $\mathbf a$), while matrices are represented by bold upper-case letters (e.g., $\mathbf A$).
The $i^{th}$ entry of a vector $\mathbf a$ is denoted by $a_i$, while its $\ell_2$--norm is denoted by $\|\mathbf{a}\|_2$.
The symbols $\mathbf{0}_p$ and $\mathbf{I}_p$ indicate the all-zeros vector of size $p$ and the $p \times p$ identity matrix, respectively.
The notations $(\cdot)^{\top}$ and $(\cdot)^{-1}$ represent the vector/matrix transpose and inverse operators, respectively. 
The statistical expectation is represented by $\mathbb{E}[\cdot]$, while the probability is indicated by $\mathbb{P}(\cdot)$.
The notation $\circ$ is used to designate the Hadamard product, i.e., $(\mathbf{A} \circ \mathbf{B})_{ij} = \mathbf{A}_{ij} \mathbf{B}_{ij}$, where $\mathbf{A}_{ij}$ is the $(i,j)$-th element of $\mathbf{A}$.
We write $``\xrightarrow{p \to \infty} "$ to indicate convergence in probability as $p \to \infty$.
The letters $G_1$ and $G_2$ are reserved to represent two independent standard Gaussian random variables. 
Finally, the function $\calM_{\ell(y;.)}(\cdot;\cdot)$ is used to denote the Moreau envelope function associated with the loss function $\ell(y;.)$, and it is defined as (with parameter $b > 0$)
\begin{align}\label{m_env}
\calM_{\ell(y;.)}(a;b)=\min_{x \in \mathbb{R}}\ell(y;x)+\frac{1}{2b}(x-a)^2.
\end{align}
\section{Learning Models}\label{model}
\subsection{Training Model}
We consider a scenario in which the learner has access to $T$ distinct learning tasks. For the $t^{th}$ task, the training dataset is given by $\lbrace (\va_{t,i}, y_{t,i}) \rbrace_{1 \leq i \leq n_t}$, where $\va_{t,i} \in \mathbb{R}^p$ represents the feature vector and $y_{t,i}$ is the corresponding label ($ \forall i \in \lbrace 1, \dots, n_t \rbrace,~ t \in \lbrace 1, \dots, T \rbrace$). 
In this work, we assume the labels are generated based on the following model:
 \begin{align}\label{mt_model}
  y_{t,i} = \varphi(\va_{t,i}^\top \vxi_t),
  \end{align}
where $\vxi_t\in\mathbb{R}^p$ is a hidden vector associated with the $t^{th}$ task, and $\varphi(\cdot)$ is a function that may be deterministic or probabilistic. Furthermore, we assume that the learning tasks are related in the following manner:
\begin{align}\label{sim_md}
\vxi_t = \sigma \vv_t  + \vv_0,~\forall t \in \lbrace 1,\dots,T \rbrace,
\end{align}
where $\vv_t\in\mathbb{R}^p$ is a task specific vector and $\vv_0\in\mathbb{R}^p$ is a shared vector between all the tasks. 
Observe that the parameter $\sigma \in \mathbb{R}$ governs the degree of similarity among the tasks. Based on this, we define the similarity between tasks using the quantity $\rho$, given by
\[
\rho = \frac{1}{1 + \sigma^2}.
\]
Note that $\rho \in [0, 1]$, where values of $\rho$ approaching $1$ indicate that the tasks are highly similar, while lower values suggest that the tasks are dissimilar. Hereafter, we refer to $\rho$ as the \textit{similarity measure}.

%

In this work, we consider a misspecified learning scenario in which the learner has access only to partial observations of the input vectors during training process. Specifically, for each input vector $\va_{t,i}$, only a subset of its components, denoted by $(a_{t,ij},~j \in \mathcal{S})$, is available to the learner, where $\mathcal{S} \subset \lbrace 1, \dots, p \rbrace$. 
To simplify the analysis, we assume that the subset $\mathcal{S}$ is fixed and does not depend on the sample index $i \in \lbrace 1, \dots, n_t \rbrace$ or the task index $t \in \lbrace 1, \dots, T \rbrace$. Furthermore, we assume that the cardinality of $\mathcal{S}$ is fixed at $k$, with $1 \leq k \leq p$.

The analysis presented in this work is valid for input vectors and hidden vectors generated randomly as summarized in the subsequent assumption.
\begin{assumption}[Input/Hidden Vectors]
\label{rad_fv}
For any $t\in\lbrace 1,\dots,T \rbrace$, the input vectors $\lbrace \va_{t,i} \rbrace_{1 \leq i \leq n_t}$ are assumed to be known and drawn independently from a standard Gaussian distribution. 
The vectors $\vv_t\in\mathbb{R}^p$ and $\vv_0\in\mathbb{R}^p$ are assumed to be independent of the input vectors and are generated independently from a uniform distribution on the unit sphere.
Without loss of generality, we assume that both $\vv_t$ and $\vv_0$ are unit-norm vectors.
Furthermore, the set $\mathcal{S}$ is assumed to be selected uniformly at random.
\end{assumption}
In addition, the results hold in the high--dimensional asymptotic regime, where the problem dimensions $p$, $k$, and $n_t$
grow large and satisfy the next assumption.
\begin{assumption}[High--dimensional Asymptotics]
\label{highdim}
For any $t\in\lbrace 1,\dots,T \rbrace$, we assume that the number of samples and the number of known components of the input vector satisfy $n_t=n_t(p)$ and $k=k(p)$ with $\alpha_{t,p}=p/n_t(p) \to \alpha_t>0$ and $\kappa_{t,p}=k(p)/n_t(p) \to \kappa_t>0$ as $p\to \infty$, where $\kappa_t \leq \alpha_t$. Furthermore, the number of tasks $T\geq 1$ is independent of the dimension $p$. 
\end{assumption}
This paper relies on specific assumptions regarding the distribution of the input vectors, the generative model in \eqref{mt_model}, and the distribution of the hidden vectors. We emphasize that these assumptions are crucial for the validity of our asymptotic analysis. An interesting direction for future research is to relax the Gaussianity assumption by demonstrating universality results (see, e.g., \cite{univ_1,univ_2}).
\subsection{A Multi--Task Learning Algorithm}
Given the similarity among the learning tasks, a widely used training strategy \cite{muti_ref_f} involves jointly learning the collection of hidden vectors $\lbrace \vxi_t \rbrace_{1 \leq t \leq T}$. This approach incorporates a regularization term that reflects the task similarity structure given in \eqref{sim_md}. In particular, a commonly adopted formulation in multi--task learning takes the following general form:
\begin{align}\label{multi_task}
\lbrace \widehat{\vw}_t \rbrace_{1\leq t \leq T}&=\argmin_{ \lbrace \vw_t\rbrace_{1\leq t \leq T} } \sum_{t=1}^{T} \frac{1}{n_t} \sum_{i=1}^{n_t} \ell \Big(y_{t,i};\vb_{t,i}^\top \vw_t \Big)  \nonumber\\&+\frac{\gamma_1}{2} \sum_{t=1}^{T} \norm{\vw_t}^2+\frac{\gamma_2}{2} \sum_{t=1}^{T} \norm{ \vw_t-\bar{\vw} }^2.
\end{align}
In the formulation above, the vector $\vb_{t,i} \in \mathbb{R}^k$ is obtained by concatenating the components of the input vector $\va_{t,i}$ corresponding to the index set $\mathcal{S}$. 
The term $\bar{\vw}$ represents the mean of the optimization vectors $\lbrace \vw_t \rbrace_{1 \leq t \leq T}$, defined as $\bar{\vw} = \sum_{t=1}^{T} \vw_t / T$. 
The parameter $\gamma_1 \geq 0$ regulates the strength of the regularization applied to each task, while $\gamma_2 \geq 0$ governs the regularization imposed on the average model. The formulation in \eqref{multi_task} is applicable to general loss functions used in both regression and classification tasks.
The loss function $\ell(y; x)$ can be expressed in one of the following two general forms:
\begin{align}\label{regclass_loss}
\ell(y; x) = f(y - x), \quad \text{or} \quad \ell(y; x) = f(yx),
\end{align}
where the first expression corresponds to regression tasks, and the second is applicable to classification tasks. The function $f(\cdot)$ denotes a general scalar function. Note that the optimization problem in \eqref{multi_task} reduces to a standard (per-task) learning problem when $\gamma_2 = 0$. Furthermore, the formulation is symmetric across tasks when all tasks have an equal number of training samples. Throughout this work, we refer to the case $\gamma_2 = 0$ as the \textit{traditional formulation}, and to the case $\gamma_2 > 0$ as the \textit{multi--task formulation}.
\subsection{Performance Measure}
The main goal of this work is to precisely analyze the performance of the multi--task learning approach on unobserved test data. We use the \textit{generalization error} to measure the performance of the considered multi--task learning formulation.
To reach a formal definition of the generalization error, we start by defining the vector $\widehat{\vbeta}_t\in\mathbb{R}^p$ as follows
\begin{align}\label{beta_hat}
\widehat{\vbeta}_t(\mathcal{S})=\widehat{\vw}_t,~\text{and}~\widehat{\vbeta}_t(\mathcal{S}^c)=\vec{0}_{p-k},
\end{align}
where $\widehat{\vbeta}_t(\mathcal{S})$ denotes the components of $\widehat{\vbeta}_t$ with index in the set $\mathcal{S}$, whereas $\vec{0}_{p-k}$ corresponds to the all-zero vector of size $p-k$.
In this paper, it is assumed that the $t^{th}$ task predicts the label of any new test sample $\va_{t,\text{new}}\in\mathbb{R}^{p}$ as follows
\begin{align}
\widehat{y}_{t,\text{new}}=\widehat{\varphi} [\widehat{\vbeta}_t^\top \va_{t,\text{new}}].
\end{align}
In the above equation, $\widehat{\varphi}(\cdot)$ denotes a pre-defined scalar function. Now, we are ready to define the generalization error. In particular, the \textbf{generalization error} associated with the $t^{th}$ task can be defined as follows
\begin{align}\label{gener_trg}
\mathcal{E}_{p,t,\text{test}}=\frac{1}{4^\vartheta} \mathbb{E}\Big[ \Big( \varphi(\vxi_t^\top \va_{t,\text{new}})-\widehat{\varphi}(\widehat{\vbeta}_t^\top  \va_{t,\text{new}}) \Big)^2 \Big].
\end{align}
Here, the parameter $\vartheta$ is set to $\vartheta = 0$ for regression tasks and $\vartheta = 1$ for binary classification tasks.
Note that the expectation in \eqref{gener_trg} is taken with respect to the distribution of $\va_{t,\text{new}}$ and $\varphi(\cdot)$. 

\vspace{1mm}
\noindent{\bf Validation Models}: In this paper, we present a precise asymptotic analysis of the formulation in \eqref{multi_task}. Our theoretical derivations provided in the appendix are applicable to a broad class of convex loss functions, and to general models satisfying \eqref{mt_model}. 
However, for clarity and to facilitate interpretation of the results, we focus on two widely used loss functions in this paper: the \textit{squared loss} and the \textit{logistic loss}, defined as follows:
\begin{align}
\ell(y;x)=\frac{1}{2}(x-y)^2,~\text{and}~\ell(y;x)=\log( 1+e^{-xy} ),
\end{align}
respectively. 
These loss functions are employed to learn regression and classification models, respectively. In the case of the \emph{regression} model, we assume that both $\varphi(\cdot)$ and $\widehat{\varphi}(\cdot)$ correspond to the \emph{identity} function. For the \emph{classification} model, both functions are taken to be the \emph{sign} function. Throughout the paper, we refer to these setups as the \textit{linear regression model} and the \textit{binary classification model}, respectively.
\section{Symmetric Multi--Task Formulation}\label{lntasks}
In this section, we present a precise high--dimensional analysis of the multi--task learning approach. For the simplicity of analysis, we consider the case when all the tasks have the same training set size, that is, $n_t=n, \forall t\in\lbrace 1,\dots,T \rbrace$. 
The general case is presented in Section~\ref{gntasks}.
\subsection{Precise Asymptotic Predictions}
The asymptotic predictions of the symmetric multi--task learning require a few definitions. We start by defining the following low--dimensional deterministic formulation
\begin{align}\label{det_form_asy}
&\min_{q,r\geq 0}\max_{\eta>0}~\frac{1}{2}(\gamma_1-\eta)(q^2+r^2)+\frac{q^2}{2} \  \frac{\gamma_2+\eta}{1+(1-\rho)\frac{\gamma_2}{\eta T}} \  \mathcal{G}(T,\eta) 
\nonumber \\ &+\mathbb{E} \left[ \mathcal{M}_{\ell(Y;.)}\left(r H+q S;\frac{\kappa}{(\gamma_2+\eta)} \Big(1+\frac{\gamma_2}{\eta T} \Big) \right) \right],
\end{align}
where the function $\mathcal{G}(\cdot,\cdot)$ and the random variable $Y$ are defined as follows
\begin{align}
&\mathcal{G}(T,\eta)=1-\frac{\gamma_2 \rho T}{\eta T+\gamma_2 (1-\rho+\rho T)}, \nonumber \\ &Y=\varphi \bigg( \frac{1}{\sqrt{\rho}} \left[S \sqrt{\frac{\kappa}{\alpha}} +Z\sqrt{1-\frac{\kappa}{\alpha}} \  \right] \bigg),
\end{align}
and $Z, H$ and $S$ are standard Gaussian independent random variables.
The expectation in the objective function of the problem in \eqref{det_form_asy} is taken over the randomness of $H$, $S$ and $Y$. 

Now, we are in a position to state our first theoretical prediction summarized in the next theorem.
\begin{theorem}[Symmetric Multi--Task Analysis]\label{lem_same}
Let Assumptions \ref{rad_fv}--\ref{highdim} hold. In addition, assume that all tasks have the same training set size, i.e., $\alpha_t = \alpha$ for all $t \in \lbrace 1, \dots, T \rbrace$. Under these conditions, the generalization error defined in \eqref{gener_trg} associated with the $t^{\text{th}}$ task converges in probability to the following limit:
\begin{align}\label{gen_conv}
{\mathcal{E}}_{p,t,\text{test}} \xrightarrow{p\to\infty} \frac{1}{4^\vartheta} \mathbb{E}\left[ \left( \varphi(c_0 G_1) -\widehat{\varphi}(c_{1,T} G_1+c_{2,T} G_2) \right)^2 \right].
\end{align}
In the above, $G_1$ and $G_2$ are independent standard Gaussian random variables. Also, $c_0$, $c_{1,T}$ and $c_{2,T}$ are constants defined as follows
\begin{align}
&c_0=\frac{1}{\sqrt{\rho}},~c_{1,T}=q_T^\star \sqrt{\frac{\kappa}{\alpha}} ,~ \text{and} \nonumber \\  &c_{2,T}=\sqrt{ \left(1-\frac{\kappa}{\alpha} \right) (q_T^\star)^2+(r_T^\star)^2 },
\end{align}
where $r_T^\star$ and $q_T^\star$ are the optimal solutions of the scalar optimization problem in \eqref{det_form_asy}. 
\end{theorem}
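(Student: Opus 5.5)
The proof reduces the test error to the joint law of two scalars and then pins those scalars down with a multivariate CGMT argument. For a fresh $\va_{t,\text{new}}$, the pair $(\vxi_t^\top \va_{t,\text{new}}, \widehat{\vbeta}_t^\top \va_{t,\text{new}})$ is jointly Gaussian given the training data. Its covariance is fixed by $\norm{\vxi_t}^2$, $\norm{\widehat{\vw}_t}^2$ and $\vxi_t(\mathcal{S})^\top \widehat{\vw}_t$.

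Since $\vv_t,\vv_0$ are independent unit vectors, $\norm{\vxi_t}^2 \to 1+\sigma^2 = 1/\rho$, which gives $c_0$. Because $\mathcal{S}$ is random with $|\mathcal{S}|/p\to \kappa/\alpha$, we also get $\norm{\vxi_t(\mathcal{S})}^2\to \kappa/(\alpha\rho)$ and $\norm{\vxi_t(\mathcal{S}^c)}^2\to (1-\kappa/\alpha)/\rho$. This explains the decomposition of $Y$ into $S$ and $Z$.

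The goal is therefore to show $\widehat{\vw}_t^\top \vxi_t(\mathcal{S})/\norm{\vxi_t(\mathcal{S})} \to q_T^\star$ and $\norm{\mathbf{P}^\perp \widehat{\vw}_t}\to r_T^\star$. From these limits one reads off $c_{1,T}$ and $c_{2,T}$ after normalizing $G_1$. The unobserved part $\va(\mathcal{S}^c)$ appears only in the labels and contributes the $(1-\kappa/\alpha)q^2$ term. Convergence of the error then follows from continuity of the Gaussian expectation in $(c_{1,T},c_{2,T})$, with dominated convergence for the sign case.

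To obtain these limits, I rewrite \eqref{multi_task} in min--max form by introducing dual variables $\vu_t$ for the loss terms through Fenchel conjugates. Each task has its own data matrix $\mB_t\in\R^{n\times k}$. I decompose $\mB_t = \mB_t \mP_t + \mB_t\mP_t^\perp$, where $\mP_t$ projects onto $\vxi_t(\mathcal{S})$. The labels depend only on $\mB_t\vxi_t(\mathcal{S})$ and $\va(\mathcal{S}^c)$, so they are independent of $\mB_t\mP_t^\perp$.

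The $T$ Gaussian blocks $\mB_t\mP_t^\perp$ are independent. This is why the multivariate CGMT \cite{dhi21inherent,akhtiamov2025novel} is used instead of the standard one. It lets me replace each bilinear term $\vu_t^\top \mB_t\mP_t^\perp\vw_t$ by $\norm{\vu_t}\vg_t^\top \mP_t^\perp\vw_t+\norm{\mP_t^\perp\vw_t}\vh_t^\top\vu_t$. The resulting auxiliary problem has the same asymptotic optimal value and localization properties. Convexity of \eqref{multi_task} in $\{\vw_t\}$ and strong convexity when $\gamma_1>0$ (or through a limiting argument) justify the min--max interchanges.

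Next I scalarize the auxiliary problem. Optimizing over the directions of $\vu_t$ and introducing the square-root trick with $\eta$ produces per-task Moreau envelopes of the loss. The resulting expectation is $\mathbb{E}[\mathcal{M}_{\ell(Y;\cdot)}(rH+qS;\cdot)]$ after the law of large numbers.

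The coupling term $\frac{\gamma_2}{2}\sum_t\norm{\vw_t-\bar{\vw}}^2$ leaves a quadratic problem jointly over $\{\vw_t\}$. It is of the form $\sum_t(\frac{\gamma_1-\eta}{2}\norm{\vw_t}^2 - \text{linear}_t) + \frac{\gamma_2}{2}\vw^\top((\mI_T-\frac1T\mathbf{1}\mathbf{1}^\top)\otimes\mI_k)\vw$. The linear terms involve $\vg_t$ and the directions $\vxi_t(\mathcal{S})$. The latter have pairwise inner products tending to $\kappa/\alpha$ times $1$ (shared $\vv_0$), against diagonal $\kappa/(\alpha\rho)$, so their normalized Gram matrix tends to $(1-\rho)\mI_T+\rho\mathbf{1}\mathbf{1}^\top$.

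I solve this quadratic problem explicitly. Both the Gram matrix and the centering operator are simultaneously diagonalized by $\mathbf{1}/\sqrt T$ and its orthogonal complement, with eigenvalues $1-\rho+\rho T$ and $1-\rho$. This produces the factors $\frac{\gamma_2+\eta}{1+(1-\rho)\gamma_2/(\eta T)}$, $\mathcal{G}(T,\eta)$, and the effective envelope parameter $\frac{\kappa}{\gamma_2+\eta}(1+\frac{\gamma_2}{\eta T})$. By symmetry across tasks, every task has the same $(q,r)$, which reduces everything to \eqref{det_form_asy}.

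Finally, the standard CGMT deviation argument transfers the convergence of the auxiliary optimizers to $\widehat{\vw}_t$. This step uses strict convexity and uniqueness of $(q^\star_T,r^\star_T)$ and uniform convergence of the convex scalar objectives.

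I expect the main obstacle to be the scalarization step with the $\gamma_2$ coupling. The $T$ tasks share the Gaussian directions only through the correlated $\vxi_t$ and the averaging operator, so the cross-task quadratic form must be handled carefully. The random $\vg_t$ across tasks are independent while the $\vxi_t$ are correlated, and obtaining the exact closed form with $\rho$ and $T$ requires tracking both eigenspaces. My first attempt will be to change variables to $\bar{\vw}$ and $\vw_t-\bar{\vw}$ and minimize over each block separately.
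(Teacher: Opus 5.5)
Your route matches the paper's at every structural point: the Gaussian reduction of the test error, the split $\mB_t=\vs_t\bar{\vxi}_{ts}^\top+\mG_t\mK_t^\perp$, the multivariate CGMT, scalarization in $(q_t,r_t)$, and consistency of the scalar optimizers. Specializing to the symmetric case from the outset, rather than deriving \eqref{det_form_m} and then setting $q_t=q$, $r_t=r$, $\eta_t=\eta$, is a harmless reordering. The one step that fails as written is the coupled quadratic, and its ingredients are what produce the stated factors.

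In the paper, $(q_t,r_t)$ are frozen, so $\frac{\gamma_1}{2}\norm{\vw_t}^2$ is the constant $\frac{\gamma_1}{2}(q_t^2+r_t^2)$. The variable $\eta_t$ is the Lagrange multiplier of the sphere constraint $\norm{\vw_t}^2=q_t^2+r_t^2$ (trust-region duality with regularity condition $\mC\succ0$), and $\lambda_t$ is the multiplier of $\bar{\vxi}_{ts}^\top\vw_t=q_t$. The $\vw$-problem is then $\frac12\vw^\top(\mC\otimes\mI_k)\vw$ plus linear terms, with $\mC=\mathrm{diag}(\eta_1,\dots,\eta_T)+\gamma_2(\mI_T-\frac1T\mathbf{1}\mathbf{1}^\top)$. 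The term $\frac12\sum_t(\gamma_1-\eta_t)(q_t^2+r_t^2)-\sum_t\lambda_tq_t$ sits outside as a constant.

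Your curvature $\gamma_1-\eta$ gives an operator whose eigenvalue on $\mathbf{1}$ is $\gamma_1-\eta$. It is indefinite for $\eta>\gamma_1$ (for every $\eta>0$ when $\gamma_1=0$), and its inverse does not give $\frac{\kappa}{\gamma_2+\eta}(1+\frac{\gamma_2}{\eta T})$. With the correct $\mC$, the eigenvalues are $\eta$ on $\mathbf{1}$ and $\gamma_2+\eta$ on its complement, so $\mC\succ0$ if and only if $\eta>0$.

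The envelope parameter is produced by the $\vw$-minimization itself, not by a square-root variable. That minimization leaves $-\frac{\norm{\vu_t}^2}{2n}\,C^{-1}_{tt}\frac{\norm{\vg_t}^2}{n}$, and $C^{-1}_{tt}\norm{\vg_t}^2/n\to\kappa C^{-1}_{tt}=\frac{\kappa}{\gamma_2+\eta}(1+\frac{\gamma_2}{\eta T})$. Maximizing over the whole $\vu_t$, not only its direction, then gives the Moreau envelope. A separate square-root variable would yield only a reparametrized problem that you would still have to reduce to \eqref{det_form_asy}.

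Your planned change of variables to $\bar{\vw}$ and $\vw_t-\bar{\vw}$ will not decouple the blocks, because the constraint directions $\bar{\vxi}_{ts}$ differ across tasks. What works is the Kronecker structure. The $\vlambda$-term becomes $-\frac12\vlambda^\top(\mC^{-1}\circ\mL)\vlambda$. In the symmetric case $\mC^{-1}=a\mI_T+b\mathbf{1}\mathbf{1}^\top$ with $a=\frac{1}{\gamma_2+\eta}$ and $b=\frac{\gamma_2}{T\eta(\gamma_2+\eta)}$, so $\mC^{-1}\circ\mL=a\mI_T+b\mL$. Maximizing over $\vlambda$ at $\vq=q\mathbf{1}$ gives $\frac{Tq^2}{2}\cdot\frac{(\gamma_2+\eta)\eta T}{\eta T+\gamma_2(1-\rho+\rho T)}$. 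This is exactly $T$ times the $q^2$-term of \eqref{det_form_asy}.
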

\begin{proof}
The result of Theorem~\ref{lem_same} is a special case of Theorem~\ref{th_gmtask}. Please refer to the appendix for more details.
\end{proof}
The technical analysis shows that the deterministic formulation in \eqref{det_form_asy} is strictly convex in the minimization variables. This implies the uniqueness of its optimal solution. Note that the asymptotic predictions in Theorem \ref{lem_same} show that the multi--task formulation in \eqref{multi_task} can be fully characterized after solving a three-dimensional deterministic formulation. Interestingly, the results in Theorem \ref{lem_same} reduces the complexity of \eqref{multi_task}, which depends on $T$, to a three-dimensional optimization problem. This allows the analysis of the multi--task formulation in \eqref{multi_task} when the number of tasks grows to infinity.

Now, we provide another simulation example to verify the results stated in Theorem \ref{lem_same}. Figure \ref{finsim1} considers the linear regression and binary classification models. The asymptotic predictions stated in Theorem \ref{lem_same} can be validated by observing that they are in excellent agreement with the actual performance of the multi--task formulation.
\begin{figure}[h!]
    \centering
    \subfigure[]{\label{finsim1a}
        \includegraphics[width=0.48\linewidth]{ 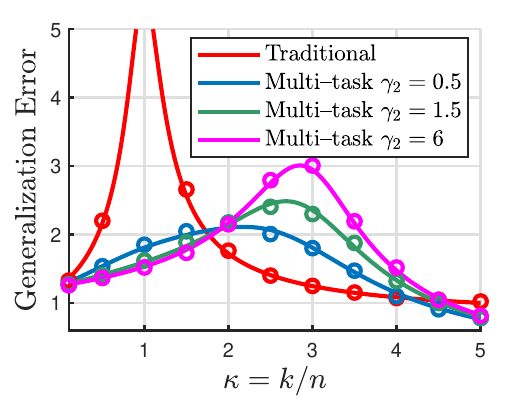}}
    \subfigure[]{\label{finsim1b}
    \includegraphics[width=0.48\linewidth]{ 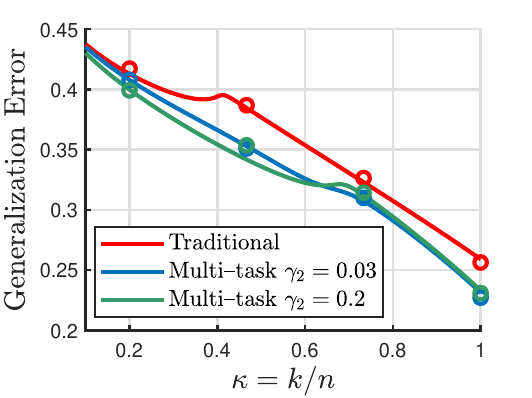}}
     
    \caption{Continuous lines: Theoretical predictions. Circles: Numerical simulations for the multi--task formulation. {\bf (a)} We consider the linear regression model and the squared loss. We set $p=2000$, $\alpha=5$, $\rho=0.8$, $\gamma_1=10^{-2}$ and $T=3$. {\bf (b)} We consider the binary classification model and the logistic loss. We set $p=600$, $\alpha=1$, $\rho=0.8$, $\gamma_1=10^{-4}$ and $T=2$. The results are averaged over $25$ independent Monte Carlo trials.}
        \label{finsim1}
\end{figure}
Figure \ref{finsim1a} considers the multi--task formulation for $T=3$. Also, it investigates the impact of the regularization strength $\gamma_2$ on the double descent phenomenon for the linear regression model. We can see that the location of the interpolation threshold depends on the regularization strength $\gamma_2$. That is, the location of the peak increases we increase the value of $\gamma_2$. This suggests that the location of the interpolation threshold moves from $1$ to $T$ smoothly in terms of $\gamma_2$. We can also see that the generalization error in the interpolation threshold first deceases and then it increases as we increase $\gamma_2$. Figure \ref{finsim1b} also illustrates the dependence of the interpolation threshold on the value of $\gamma_2$ for the binary classification model. It suggests that the double descent for the traditional formulation, occurring at $\kappa^\star\approx 0.41$, is mitigated for small values of $\gamma_2$. Then, it appears again at $T \kappa^\star$ as we increase $\gamma_2$. Finally, Figure \ref{finsim1} recommends that a small value of $\gamma_2$ is capable of reducing the double descent effects for the binary classification model employing a logistic loss.

\subsection{Combining Large Number of Tasks}
Here, we study the properties of the multi--task formulation in the regime where the number of tasks $T$ grows to infinity after the dimensions $p$, $n$ and $k$ have been taken to infinity.
We start our analysis by defining the following scalar formulation
\begin{align}\label{det_form_asy2}
\min_{q,r\geq 0}\max_{\eta>0}&~\frac{1}{2}(\gamma_1-\eta)(q^2+r^2)+\frac{q^2}{2} (\gamma_2+\eta)\Big(1-\frac{\gamma_2 \rho}{\eta+\gamma_2 \rho} \Big)
\nonumber \\ &+\mathbb{E} \Big[ \mathcal{M}_{\ell(Y;.)}\Big(r H+q S;\frac{\kappa}{\gamma_2+\eta} \Big) \Big],
\end{align}
where $H$ and $S$ are independent standard Gaussian random variables.
The theoretical result is stated in the next Lemma.
\begin{lemma}[Large Number of Tasks]\label{lem_asy}
Suppose that the assumptions \ref{rad_fv}-\ref{highdim} are satisfied. Moreover, assume that the tasks have the same training set size, i.e., $\alpha_t=\alpha, \forall t\in\lbrace 1,\dots,T \rbrace$. Then, for any $\zeta > 0$, the generalization error corresponding to the $t^{th}$ task converges in probability as follows 
\begin{align}
\lim_{T\to+\infty}\lim_{p\to+\infty}\mathbb{P}\big( \abs{ {\mathcal{E}}_{p,t,\text{test}} -  {\mathcal{E}}_{t,\text{test}} } < \zeta \big)=1.\nonumber
\end{align}
The scalar ${\mathcal{E}}_{t,\text{test}}$ is defined as follows
\begin{align}\label{gen_conv_tasy}
{\mathcal{E}}_{t,\text{test}}= \frac{1}{4^\vartheta} \mathbb{E}\left[ \left( \varphi(c_0 G_1) -\widehat{\varphi}(c_1 G_1+c_2 G_2) \right)^2 \right],
\end{align}
where $c_1$ and $c_2$ are constants defined as follows
\begin{align}
\label{c0c1}
c_1=q^\star \sqrt{\frac{\kappa}{\alpha}} ,~ \text{and} ~ c_2=\sqrt{ \left(1-\frac{\kappa}{\alpha} \right) (q^\star)^2+(r^\star)^2 },
\end{align}
with $r^\star$ and $q^\star$ being the optimal solutions of the scalar formulation given in \eqref{det_form_asy2}. 
\end{lemma}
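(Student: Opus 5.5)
The proof takes Theorem~\ref{lem_same} as given for each fixed $T$ and then passes to the limit $T\to\infty$ in the deterministic scalar problem. For fixed $T$, Theorem~\ref{lem_same} gives $\mathcal{E}_{p,t,\text{test}} \to \mathcal{E}_{T}$ in probability as $p\to\infty$. Here $\mathcal{E}_T$ is the right-hand side of \eqref{gen_conv}, built from the constants $c_{1,T},c_{2,T}$, which depend on the optimizers $(q_T^\star,r_T^\star)$ of \eqref{det_form_asy}. It therefore suffices to show two things:
\begin{enumerate}
\item $(q_T^\star,r_T^\star)\to(q^\star,r^\star)$ as $T\to\infty$, where $(q^\star,r^\star)$ solves \eqref{det_form_asy2};
\item the map $(c_1,c_2)\mapsto \mathbb{E}[(\varphi(c_0G_1)-\widehat{\varphi}(c_1G_1+c_2G_2))^2]$ is continuous at $(c_1,c_2)$.
\end{enumerate}
Given both, for any $\zeta>0$ pick $T_0$ with $|\mathcal{E}_T-\mathcal{E}_{t,\text{test}}|<\zeta/2$ for all $T\ge T_0$. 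Then $\mathbb{P}(|\mathcal{E}_{p,t,\text{test}}-\mathcal{E}_{t,\text{test}}|<\zeta)\ge \mathbb{P}(|\mathcal{E}_{p,t,\text{test}}-\mathcal{E}_T|<\zeta/2)\to 1$, which yields the double limit.

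Continuity of the error map is routine. For the identity model it is a quadratic polynomial in $c_1,c_2$. For the sign model it is an explicit arccos-type function of $c_1/\sqrt{c_1^2+c_2^2}$, continuous whenever $c_2>0$ or $c_1\neq0$. This requires $(q^\star,r^\star)\neq(0,0)$, which I would verify from the first-order conditions of \eqref{det_form_asy2}, or else handle separately.

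For the convergence of the optimizers, denote by $F_T(q,r,\eta)$ the objective in \eqref{det_form_asy} and by $F_\infty$ that in \eqref{det_form_asy2}. Fix $\eta>0$. As $T\to\infty$, the factor $1+(1-\rho)\gamma_2/(\eta T)\to1$ and $1+\gamma_2/(\eta T)\to1$. Also
\[
\mathcal{G}(T,\eta)=1-\frac{\gamma_2\rho}{\eta+\gamma_2\rho+\gamma_2(1-\rho)/T}\to 1-\frac{\gamma_2\rho}{\eta+\gamma_2\rho}.
\]
The Moreau envelope $\mathcal{M}_{\ell}(a;b)$ is continuous and monotone in $b>0$, so its expectation converges by dominated convergence, using $\mathcal{M}_\ell(a;b)\le \ell(Y;a)$ and integrability of the loss for the squared and logistic cases. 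Hence $F_T\to F_\infty$ pointwise. Since the $T$-dependence enters through factors that are monotone in $T$ and uniformly controlled on sets $\eta\in[\eta_0,\eta_1]$ with $(q,r)$ bounded, the convergence is uniform on such compacts. Defining $\Phi_T(q,r)=\sup_{\eta>0}F_T(q,r,\eta)$, I would then deduce $\Phi_T\to\Phi_\infty$ uniformly on compacts in $(q,r)$. Finally I would invoke the standard epi-convergence / argmin-continuity result for convex functions: pointwise convergence of convex functions on an open set implies uniform convergence on compacts. Combined with strict convexity and uniqueness of the minimizer of $\Phi_\infty$, stated in the paper after Theorem~\ref{lem_same}, this gives $(q_T^\star,r_T^\star)\to(q^\star,r^\star)$.

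The main obstacle is controlling the inner maximization over the noncompact range $\eta\in(0,\infty)$ uniformly in $T$, so that the max and the limit can be exchanged. Near $\eta\to0$ the terms $\gamma_2/(\eta T)$ blow up, so the limit is not uniform in $\eta$ there. I would first show the optimal $\eta_T^\star(q,r)$ stays in a compact $[\eta_0,\eta_1]\subset(0,\infty)$ for all large $T$ and all $(q,r)$ in a compact set. The upper bound follows because the term $-\eta(q^2+r^2)/2$ dominates for $(q,r)\neq0$. The lower bound would use the first-order condition in $\eta$, equivalently the relation $\eta$ = (norm of the Moreau residual)$/\kappa$ inherited from the CGMT derivation, which is bounded away from zero when $r>0$. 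I would also need $(q_T^\star,r_T^\star)$ to stay in a bounded set, obtained from coercivity via the $\gamma_1(q^2+r^2)/2$ term when $\gamma_1>0$, or from a uniform lower bound on $\Phi_T$ otherwise. Once these compactness facts hold, the argument above completes the proof.
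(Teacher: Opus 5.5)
Your plan is the same as the paper's, whose proof is only the remark that the lemma follows from Theorem~\ref{lem_same} by letting $T\to\infty$. Reducing to convergence of $(q_T^\star,r_T^\star)$ plus continuity of the error map, and the $\zeta/2$ argument for the iterated limit, is the right way to make that precise. The gap is in the step you call the crux. The claim that the inner maximizer $\eta_T^\star(q,r)$ stays in a fixed $[\eta_0,\eta_1]\subset(0,\infty)$ for all large $T$ and all $(q,r)$ in a compact set is false in general, and both of your justifications fail.

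\emph{Upper bound.} The $q^2$-term of \eqref{det_form_asy} is $\frac{q^2}{2}g_T(\eta)$ with $g_T(\eta)=\frac{(\gamma_2+\eta)\eta T}{(\eta+\gamma_2\rho)T+\gamma_2(1-\rho)}=\eta+O(1)$. This cancels $-\eta q^2/2$ exactly, so only $-\eta r^2/2$ survives. Hence the maximizer blows up as $r\to0$, and at $r=0$ the function $F_T(q,0,\cdot)$ is increasing, with its supremum only approached as $\eta\to\infty$.

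\emph{Lower bound.} Stationarity in $\eta$ balances $r^2$ (and a $q^2$ term) against $\kappa^{-1}\mathbb{E}[(a-\mathrm{prox}(a))^2]$; it does not keep $\eta$ away from $0$. In fact $F_\infty(q,r,\cdot)$ is concave and stays finite as $\eta\to0^+$, since its Moreau parameter tends to $\kappa/\gamma_2$. For the squared loss, the coefficient of $r^2$ in $\partial_\eta F_\infty(q,r,0^+)$ is $-\frac12\bigl(1-\kappa/(\kappa+\gamma_2)^2\bigr)$. So whenever $(\kappa+\gamma_2)^2>\kappa$ and $r$ is large, the supremum sits at the boundary and $\eta_T^\star(q,r)\to0$, even though $r>0$.

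The missing idea is that no localization in $\eta$ is needed, because the $T$-dependence is monotone. You noticed this but used it only for Dini-type uniformity on $[\eta_0,\eta_1]$.
\begin{itemize}
\item The factor $g_T(\eta)$ has the form $AT/(BT+C)$ with $C=\gamma_2(1-\rho)\ge0$, so it is nondecreasing in $T$.
\item The Moreau parameter $\frac{\kappa}{\gamma_2+\eta}(1+\frac{\gamma_2}{\eta T})$ is nonincreasing in $T$, and $\mathcal{M}_{\ell(y;.)}(a;b)$ is nonincreasing in $b$.
\end{itemize}
Hence $F_T\uparrow F_\infty$ pointwise. For every $(q,r)$ this gives $\lim_T\Phi_T=\sup_T\sup_{\eta>0}F_T=\sup_{\eta>0}F_\infty=\Phi_\infty$, wherever the maximizers sit.

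From there you have two ways to get convergence of the minimizers.
\begin{itemize}
\item Use the convexity of $\Phi_T$ asserted in the paper, together with a local argument around the unique minimizer of $\Phi_\infty$.
\item Alternatively, use the fact that a nondecreasing sequence of lower semicontinuous functions epi-converges to its supremum; each $\Phi_T$ is lower semicontinuous as a supremum of continuous functions. Add the uniform coercivity $\Phi_T(q,r)\ge\lim_{\eta\to0^+}F_T(q,r,\eta)=\frac{\gamma_1}{2}(q^2+r^2)$, which holds for $\gamma_1>0$ and the squared and logistic losses since their infimum is $0$. Together with uniqueness of the minimizer of $\Phi_\infty$, this yields $(q_T^\star,r_T^\star)\to(q^\star,r^\star)$.
\end{itemize}
With this repair, the rest of your argument goes through.
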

\begin{proof}
Lemma \ref{lem_asy} follows directly from the results stated in Theorem \ref{lem_same} by letting $T \to \infty$.
\end{proof}
Lemma \ref{lem_asy} shows that the asymptotic properties of the multi--task formulation in \eqref{multi_task} can be fully characterized by solving a simple scalar problem, with the limit $T\to \infty$ taken after the limits $p,n,k\to \infty$.
Our analysis shows that the formulation in \eqref{det_form_asy2} is strictly convex in the minimization variables, which implies the uniqueness of its optimal solutions.

In the following simulation example, we validate the results of Lemma \ref{lem_asy}. In particular, we investigate the convergence behavior of the multi--task formulation under the sequential asymptotic regime in which the dimensions $p$, $k$, and $n$ tend to infinity, followed by the limit $T\to\infty$.
In Figure \ref{figsim3}, we consider the linear regression and binary classification models combined with the squared loss function.
\begin{figure}[h!]
    \centering
    \subfigure[]{
       \includegraphics[width=0.48\linewidth]{ 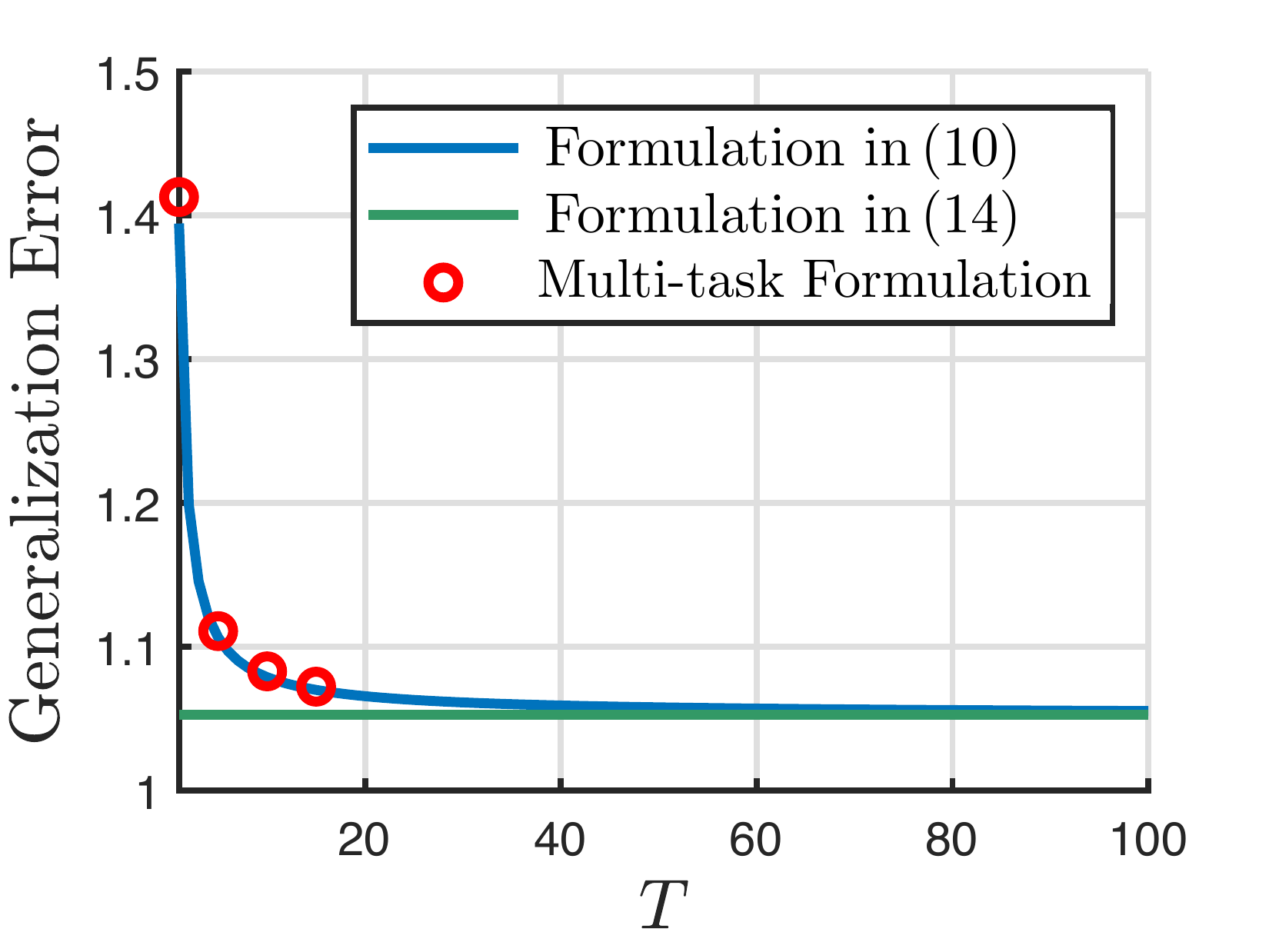}}
    \subfigure[]{
    \includegraphics[width=0.48\linewidth]{ 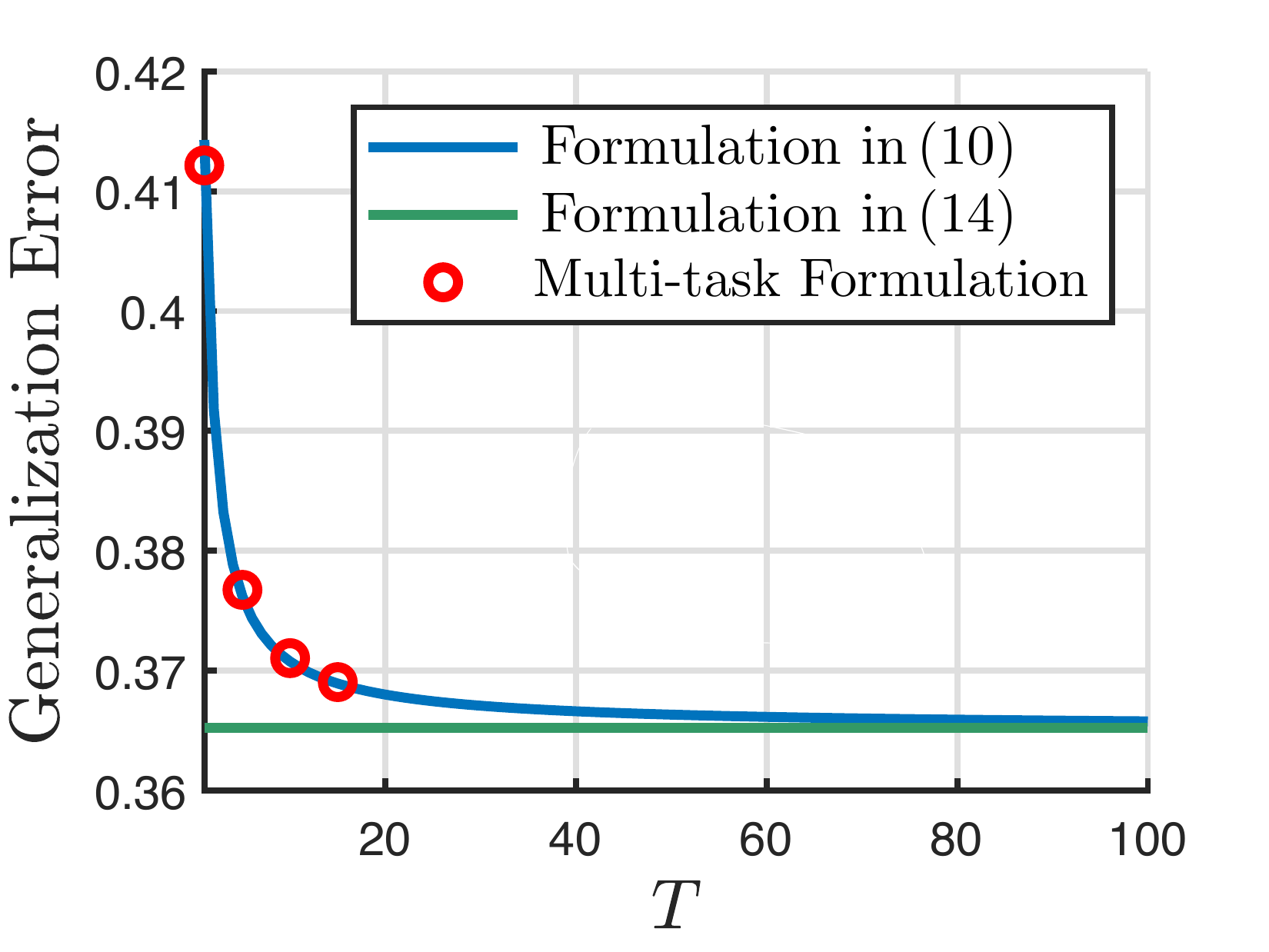}}
    \caption{Continuous line: Theoretical predictions. Circle: Numerical simulations for the multi--task formulation. {\bf (a)} We consider the linear regression model and the squared loss. We set $p=1000$, $\alpha=2$, $\kappa=0.5$, $\gamma_1=0.1$, $\gamma_2=0.5$ and $\rho=0.85$. {\bf (b)} We consider the binary classification model and the squared loss. We set $p=1000$, $\alpha=2$, $\kappa=1$, $\gamma_1=0.05$, $\gamma_2=0.2$ and $\rho=0.75$. The results are averaged over $100$ independent Monte Carlo trials.}
        \label{figsim3}
\end{figure}
In addition, we consider the generalization error of \eqref{multi_task} for values of $T$ smaller than $20$ for computational complexity reasons. First, we can see that the results in Lemma \ref{lem_asy} are in excellent agreement with the actual performance of \eqref{multi_task}. 
Note that the generalization error of \eqref{multi_task} converges to the generalization error of the deterministic formulation in \eqref{det_form_asy2}. We can also see that the limit is already achieved using a reasonable number of tasks, i.e., $T \approx 80$. Moreover, observe that  the generalization error of the multi--task formulation is strictly decreasing as a function of the number of tasks $T$. This suggests that it is always beneficial to combine more related tasks.
\subsection{Regularization Effects} 
%
%
%
%
The deterministic formulation in \eqref{det_form_asy2} is independent of the number of tasks. Essentially, the result in Lemma \ref{lem_asy} states that the generalization error of \eqref{multi_task} associated with each task can be asymptotically determined by solving the deterministic formulation in \eqref{det_form_asy2} separately at each task. Therefore, one can use this asymptotic result to determine task specific formulations that will globally lead to the same performance of the multi--task formulation. The main objective of this section is to identity $T$ formulations that can be solved separately at each task and they globally lead to the same performance of the multi--task formulation.

Before stating our main results, let us define the following formulation, which we refer to as the \textit{separate formulation}. Specifically, solving the multi--task formulation will be equivalent to solving $T$ problems separately with an additional regularization term. These problems can be written as
\begin{align}\label{smulti_task}
\widehat{\vw}_t&=\argmin_{\vw_t\in\mathbb{R}^k}~ \frac{1}{n} \sum_{i=1}^{n} \ell \Big(y_{t,i};\vb_{t,i}^\top \vw_t \Big)+\frac{\gamma_1+\gamma_2}{2} \norm{\vw_t}^2 \nonumber \\ &-\frac{\gamma_2 R(\rho)}{2} (\bar{\vxi}_{ts}^\top \vw_t)^2, \ \ {\rm for} \ \  t\in\lbrace 1,\dots,T \rbrace,
\end{align}
where $\bar{\vxi}_{ts}$ is the normalized version of the vector ${\vxi}_{ts}$. Additionally, the scalar $R(\rho)$ depends on the similarity measure $\rho$ and satisfies a fixed point equation. Particularly, the value of $R(\rho)$ is selected such that the following equality is satisfied
\begin{align}\label{fixed_equ}
&\mathbb{E}\left[ \left( \varphi(c_0 G_1) -\widehat{\varphi}(c_1 G_1 + c_2 G_2) \right)^2 \right]-\nonumber \\ &\mathbb{E}\left[ \left( \varphi(c_0 G_1) -\widehat{\varphi}(c_{1,R} G_1 + c_{2,R} G_2) \right)^2 \right]=0.
\end{align}
Here, $c_1$ and $c_2$ are the constants defined in \eqref{c0c1}. Moreover, the terms $c_{1,R}$ and $c_{2,R}$ depend on the value of $R(\rho)$ as follows 
\begin{align}
\label{c1r}
c_{1,{R}}=q_R^\star \sqrt{\frac{\kappa}{\alpha}} ,~c_{2,R}=\sqrt{ \left(1-\frac{\kappa}{\alpha} \right) (q_R^\star)^2+(r_R^\star)^2 },
\end{align}
where $q_R^\star$ and $r_R^\star$ are optimal solutions of the following deterministic formulation 
\begin{align}\label{smulti_task_asy}
\min_{q,r\geq 0}\max_{\eta>0}&~\frac{r^2}{2}(\gamma_1-\eta)+\frac{q^2}{2} (\gamma_1+\gamma_2)-\frac{\gamma_2 R(\rho)}{2} q^2 \nonumber \\ &+\mathbb{E} \Big[ \mathcal{M}_{\ell(Y;.)}\Big(r H+q S;\frac{\kappa}{\gamma_2+\eta} \Big) \Big].
\end{align}
In the next Lemma, we provide a precise analysis of the generalization error of the separate formulation introduced in \eqref{smulti_task}.
\begin{lemma}[Separate Formulation]
\label{ta5riff}
Suppose that the assumptions \ref{rad_fv}-\ref{highdim} are satisfied. Moreover, assume that the tasks have the same training set size, i.e., $\alpha_t=\alpha, \forall t\in\lbrace 1,\dots,T \rbrace$. Then, for any $\zeta > 0$, the generalization error of \eqref{smulti_task},  $\hat {\mathcal{E}}_{p,t,\text{test}}$, converges in probability as follows 
\begin{align}
\lim_{p\to+\infty}\mathbb{P}\big( \abs{ \tilde {\mathcal{E}}_{p,t,\text{test}} -  \tilde{\mathcal{E}}_{t,\text{test}} } < \zeta \big)=1. \nonumber
\end{align}
The quantity $\tilde{\mathcal{E}}_{t,\text{test}}$ is defined as
\begin{align}\label{gen_conv_tasy}
\tilde{\mathcal{E}}_{t,\text{test}}= \frac{1}{4^\vartheta} \mathbb{E}\left[ \left( \varphi(c_0 G_1) -\widehat{\varphi}(c_{1,R} G_1 + c_{2,R} G_2) \right)^2 \right],
\end{align}
where $c_{1,R}$ and $c_{2,R}$ are as defined in \eqref{c1r}.
\end{lemma}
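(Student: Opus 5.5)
Lemma~\ref{ta5riff} concerns a single task (each problem in \eqref{smulti_task} is decoupled), so I will run the standard single-task CGMT argument of \cite{chris:151} on \eqref{smulti_task}, with a rank-one, concave-in-direction perturbation of the ridge penalty.

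\textbf{Step 1: decomposition.} Fix $t$ and let $\vb_{t,i}=\va_{t,i}(\mathcal{S})$. Write $\vxi_t^\top\va_{t,i}=\vxi_{ts}^\top\vb_{t,i}+\vxi_{t,s^c}^\top\va_{t,i}(\mathcal{S}^c)$, where the second term is Gaussian and independent of $\vb_{t,i}$. By Assumption~\ref{rad_fv} and concentration on the sphere, $\norm{\vxi_t}^2\to 1+\sigma^2=1/\rho$, $\norm{\vxi_{ts}}^2\to \kappa/(\alpha\rho)$ and $\norm{\vxi_{t,s^c}}^2\to(1-\kappa/\alpha)/\rho$. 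This reproduces the law of $Y$ in the paper, with $S\propto\bar{\vxi}_{ts}^\top\vb_{t,i}$. Decompose $\vw_t=q\,\bar{\vxi}_{ts}+\vw_\perp$ with $\norm{\vw_\perp}=r$. The regularizer then becomes $\frac{\gamma_1+\gamma_2}{2}(q^2+r^2)-\frac{\gamma_2R(\rho)}{2}q^2$, which depends only on $(q,r)$. This is the point of the construction.

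\textbf{Step 2: CGMT.} Introduce $\vu=\mB_\perp\vw_\perp$ and its Lagrange dual $\vs$, giving a bilinear form $\vs^\top\mB_\perp\vw_\perp$ in the Gaussian matrix $\mB_\perp$, which is independent of $\mB\bar{\vxi}_{ts}$ and of the labels. Apply the CGMT to obtain the auxiliary problem with $\norm{\vw_\perp}\vg^\top\vs+\norm{\vs}\vh^\top\vw_\perp$. Optimize over the direction of $\vw_\perp$ and over the direction of $\vs$. Then use the square-root trick $\norm{\vs}=\min_{\tau>0}\ldots$ to introduce the scalar $\eta$, and write the loss part as an empirical average of Moreau envelopes $\calM_{\ell(Y;\cdot)}(rH+qS;\cdot)$. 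Laws of large numbers then give the deterministic problem \eqref{smulti_task_asy}.

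\textbf{Step 3: generalization error.} For fresh $\va_{t,\text{new}}$, the pair $(\vxi_t^\top\va_{t,\text{new}},\widehat{\vbeta}_t^\top\va_{t,\text{new}})$ is jointly Gaussian with variances $1/\rho$ and $q^2+r^2$, and covariance $q\norm{\vxi_{ts}}$. Rewriting this pair as $c_0G_1$ and $c_{1,R}G_1+c_{2,R}G_2$ yields the constants in \eqref{c1r}. Consistency of $(\hat q,\hat r)\to(q_R^\star,r_R^\star)$ follows from the standard CGMT localization argument, using strict convexity of the deterministic objective in $(q,r)$. Continuity of the map $(q,r)\mapsto\tilde{\mathcal{E}}$ (through Gaussian smoothing, even for $\widehat{\varphi}=\mathrm{sign}$) then gives the convergence in probability.

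\textbf{Main obstacle.} The hardest step is convexity. The term $-\frac{\gamma_2R}{2}q^2$ makes the primal problem nonconvex if $\gamma_1+\gamma_2-\gamma_2R<0$. This step breaks the CGMT convexity requirements and the uniqueness argument. I would first restrict to $R(\rho)$ with $\gamma_2R<\gamma_1+\gamma_2$ and check that the value solving \eqref{fixed_equ} lies in this range. Failing that, I would add the loss curvature in the direction $\bar{\vxi}_{ts}$ and argue convexity only asymptotically, which is more delicate.
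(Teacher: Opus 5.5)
Your proposal is correct and takes essentially the paper's route. The paper omits this proof, saying it repeats the CGMT argument behind Theorem~\ref{lem_same}, which here is the decoupled single-task case: decompose along $\bar{\vxi}_{ts}$, reduce the auxiliary problem to the scalar min--max \eqref{smulti_task_asy} (your square-root step yields exactly this after substituting $\eta+\gamma_2$ for your multiplier), obtain consistency of $(q,r)$ from strict convexity, and then use the bivariate-Gaussian form of the error.

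The convexity obstacle you flag does not actually arise. The lemma never uses \eqref{fixed_equ}, so it holds for any fixed $R$ that keeps the problem convex. For $R\in[0,1]$, which is where the paper places $R(\rho)$, the regularizer's Hessian $(\gamma_1+\gamma_2)\mI_k-\gamma_2R\,\bar{\vxi}_{ts}\bar{\vxi}_{ts}^\top$ has smallest eigenvalue $\gamma_1+\gamma_2(1-R)\geq\gamma_1$. The problem is therefore convex, and strongly convex when $\gamma_1>0$, as the paper notes, so your asymptotic-convexity fallback is unnecessary.
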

\begin{proof} The proof follows the same techniques as in the proof of Theorem \ref{lem_same}, and it is thus omitted for brevity.
\end{proof}
With the results of Lemma~\ref{ta5riff} at hand, we can now present the main results of this section, which is stated formally next.
\begin{corollary}[Regularization Effects]\label{cor_reg}
Under the same settings as in Lemma \ref{ta5riff} and for any $\zeta > 0$, it holds
\begin{align}
\lim_{T\to+\infty}\lim_{p\to+\infty}\mathbb{P}\big( \abs{ {\mathcal{E}}_{p,t,\text{test}} -  \tilde{\mathcal{E}}_{p,t,\text{test}} } < \zeta \big)=1, \nonumber
\end{align}
where ${\mathcal{E}}_{p,t,\text{test}}$ is the generalization error of \eqref{multi_task}, and $\tilde{\mathcal{E}}_{p,t,\text{test}}$ is the generalization error corresponding to the formulation in \eqref{smulti_task}.
\end{corollary}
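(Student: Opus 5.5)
The plan is a triangle inequality through the two deterministic limits already available. Fix $\zeta>0$ and write
\begin{align*}
\abs{ {\mathcal{E}}_{p,t,\text{test}} - \tilde{\mathcal{E}}_{p,t,\text{test}} } \leq \abs{ {\mathcal{E}}_{p,t,\text{test}} - {\mathcal{E}}_{t,\text{test}} } + \abs{ {\mathcal{E}}_{t,\text{test}} - \tilde{\mathcal{E}}_{t,\text{test}} } + \abs{ \tilde{\mathcal{E}}_{t,\text{test}} - \tilde{\mathcal{E}}_{p,t,\text{test}} }.
\end{align*}
The event $\{\abs{ {\mathcal{E}}_{p,t,\text{test}} - \tilde{\mathcal{E}}_{p,t,\text{test}} }<\zeta\}$ then contains the intersection of the events on which the first and third terms are below $\zeta/2$, provided the middle term vanishes. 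A union bound therefore reduces the corollary to three separate facts.

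\textbf{Middle term.} It is exactly zero by construction. The scalar $R(\rho)$ is chosen so that the fixed point equation \eqref{fixed_equ} holds. Multiplying that equation by $4^{-\vartheta}$ gives ${\mathcal{E}}_{t,\text{test}}=\tilde{\mathcal{E}}_{t,\text{test}}$, where the left side is the quantity in Lemma~\ref{lem_asy} and the right side the quantity in Lemma~\ref{ta5riff}. Neither depends on $T$: the formulations \eqref{det_form_asy2} and \eqref{smulti_task_asy} contain no $T$, and $c_0$ does not either.

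\textbf{Third term.} Lemma~\ref{ta5riff} gives $\lim_{p\to\infty}\mathbb{P}(\abs{\tilde{\mathcal{E}}_{p,t,\text{test}}-\tilde{\mathcal{E}}_{t,\text{test}}}<\zeta/2)=1$ for every fixed $T$. The separate problem \eqref{smulti_task} for task $t$ involves only the data of task $t$, so this probability does not depend on $T$ at all. Its limit in $p$ is therefore $1$ uniformly in $T$, and the outer limit $T\to\infty$ is trivially $1$.

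\textbf{First term.} Lemma~\ref{lem_asy} gives $\lim_{T\to\infty}\lim_{p\to\infty}\mathbb{P}(\abs{{\mathcal{E}}_{p,t,\text{test}}-{\mathcal{E}}_{t,\text{test}}}<\zeta/2)=1$.

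\textbf{Combining.} I would write
\begin{align*}
\mathbb{P}\big(\abs{ {\mathcal{E}}_{p,t,\text{test}} - \tilde{\mathcal{E}}_{p,t,\text{test}} }<\zeta\big) \geq \mathbb{P}(A_{p,T})+\mathbb{P}(B_{p})-1,
\end{align*}
where $A_{p,T}$ and $B_p$ are the two events above (first-term and third-term, respectively). Taking $\liminf_{p}$ and then $\liminf_T$, and using that probabilities are at most $1$, yields a limit equal to $1$.

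\textbf{Main obstacle.} The delicate point is existence of $R(\rho)$ solving \eqref{fixed_equ}. Since the corollary is stated ``under the same settings as in Lemma~\ref{ta5riff}'', I would take that existence as part of the setting. If it had to be justified, I would first try continuity in $R$ of the optimizer $(q_R^\star,r_R^\star)$ of \eqref{smulti_task_asy}, which holds by strict convexity and a stability argument for the saddle point, and then apply an intermediate value argument. The endpoints would be $R=0$, which is the $\gamma_1+\gamma_2$ ridge, and the largest $R$ keeping the problem convex. A secondary subtlety is that the limit in Lemma~\ref{lem_asy} is iterated, not joint; the argument above respects that order, so no uniformity in $T$ of the $p$-limit is needed for the multi--task term.
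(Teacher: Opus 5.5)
Your proposal is correct and is essentially the paper's own argument, which the paper leaves implicit (it states the corollary right after Lemma~\ref{ta5riff} with no written proof). Lemma~\ref{lem_asy} controls the multi--task error, Lemma~\ref{ta5riff} controls the separate-formulation error, and choosing $R(\rho)$ through \eqref{fixed_equ} makes the two $T$-independent deterministic limits coincide; your triangle inequality and union bound are exactly what is needed, and the paper also presupposes that $R(\rho)$ exists (arguing this only heuristically on $[0,1]$, via monotonicity in $R$ and a bracketing between $R=0$ and $R=1$), so taking it as part of the setting is consistent.
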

Note that Corollary \ref{cor_reg} provides a precise characterization of the regularization effects of the multi--task formulation in \eqref{multi_task} in the regime where the dimensions $p$, $k$, and $n$ first tend to infinity, followed by the limit $T\to\infty$.
It shows that the performance of the multi--task formulation can be achieved by solving $T$ formulations of the form in \eqref{smulti_task} separately for each task. Note that the formulation in \eqref{smulti_task} is strongly convex and cannot be solved in practice since the vector $\vxi_t$ is unknown by the learner. However, it precisely determines the reasons behind the benefits gained from combining related tasks using \eqref{multi_task}. We can see that the combination of large number of tasks leads to an additional ridge regularization with strength $\gamma_2$. Moreover, it leads to a regularization that depends on the correlation between the optimization vector and the observed components of the hidden vector $\vxi_t$. This particular regularization is the first reason behind the generalization improvement since it favors solutions aligned with the generative model in \eqref{mt_model}.

The values of $R(\rho)$ for the extreme cases $\rho=0$ and $\rho=1$ are easy to obtain. Indeed, to ensure that (14) and (21) are equivalent, we get simply $R(0)=0$ and $R(1)=1$.
Generally, the value of $R(\rho)$ should satisfy the equation in \eqref{fixed_equ}, which guarantees that the deterministic formulations in \eqref{det_form_asy2} and \eqref{smulti_task_asy} are equivalent in terms of the asymptotic generalization error. Intuitively, one can see that the equation in \eqref{fixed_equ} has a unique solution for any $\rho \in [0,1]$. This is because the generalization error associated with the deterministic problem in \eqref{smulti_task_asy} is strictly increasing as a function of $R(\rho) \in [0,1]$. Besides, the formulation in \eqref{det_form_asy2} will always lead to a value of $q$ that is between the value obtained by solving the formulation in \eqref{smulti_task_asy} for $R(0)=0$ and $R(1)=1$. This means that there exists a unique $R(\rho)\in [0,1]$ that satisfies the equation in \eqref{fixed_equ} for any $\rho\in[0,1]$. 

Moreover, one can see that, when the tasks are fully dissimilar (i.e., $\rho=0$), the multi--task formulation is only adding an additional ridge regularization with strength $\gamma_2$ asymptotically. When the tasks are fully aligned (i.e., $\rho=1$), it can also be observed that, in addition to the ridge regularization, the multi--task formulation is also adding a regularization that favors solutions with maximum correlation with $\vxi_{ts}$. Here, ${\vxi}_{ts}$ represents the vector formed by the entries of the vector $\vxi_t$ with index in the set $\mathcal{S}$.

A simulation example is now given to illustrate the results stated in Corollary \ref{cor_reg}. We consider the binary classification model and the squared loss function. In Figure \ref{figsim4b}, we consider the generalization error of \eqref{multi_task} for values of $T$ smaller than $20$ for computational complexity reasons. Figure \ref{figsim4b} first shows that our results match the actual performance of the formulations in \eqref{multi_task} and \eqref{smulti_task}. We can also notice that the generalization error of the multi--task formulation in \eqref{multi_task} converges to the generalization error corresponding to \eqref{smulti_task}. This provides an empirical verification of the results stated in Corollary \ref{cor_reg}. We can also see that the limit is already achieved with a reasonable number of tasks, i.e., $T\approx 80$.
\begin{figure}[h!]
    \centering
    \subfigure[]{\label{figsim4b}
       \includegraphics[width=0.48\linewidth]{ 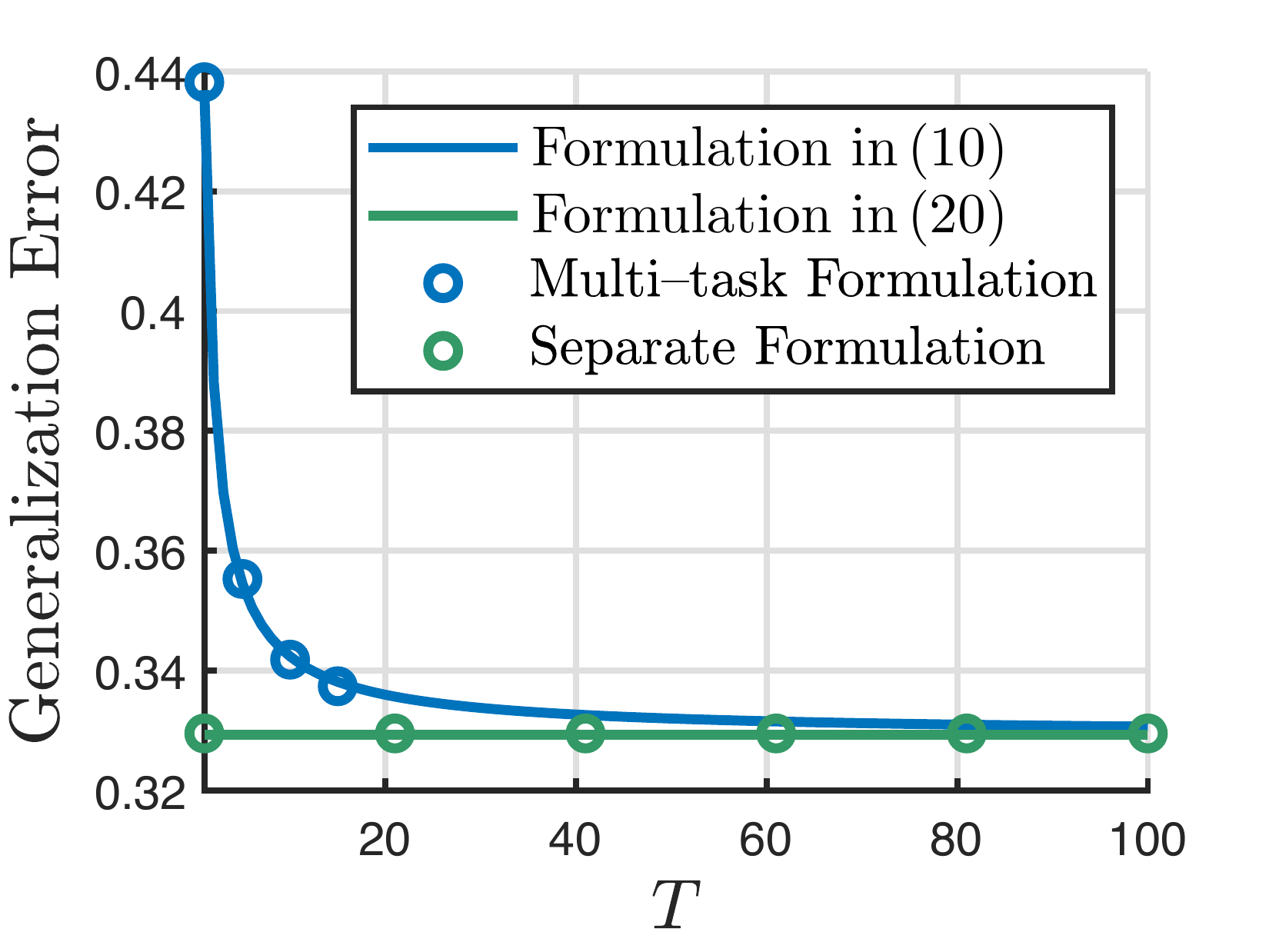}}
    \subfigure[]{\label{figsim4a}
        \includegraphics[width=0.48\linewidth]{ 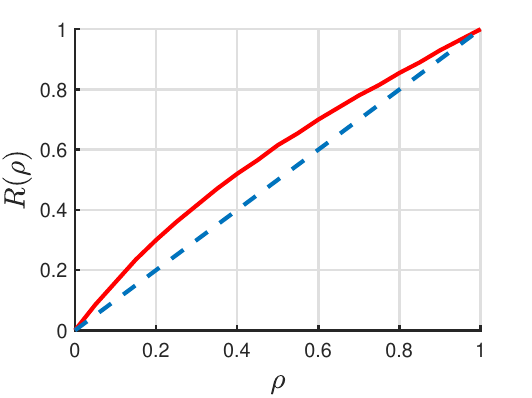}}
     
    \caption{{\bf (a)} Continuous lines: Theoretical predictions. Circles: Numerical simulations for the multi--task and separate formulations. We consider the binary classification model and the squared loss. We set $\alpha=4$, $\kappa=2$, $\gamma_1=0.1$, $\gamma_2=1$ and $\rho=0.3$. {\bf (b)} The value of $R(\rho)$ as a function of the similarity measure $\rho$. We consider the binary classification model and the squared loss. We used $p=1000$, $\alpha=2$, $\kappa=1$, $\gamma_1=0.01$, $\gamma_2=0.6$ and $\rho=0.75$. The results are averaged over $100$ independent trials.}
        \label{figsim4}
\end{figure}
Figure \ref{figsim4a} illustrates the value of $R(\rho)$ as a function of the similarity measure $\rho$. First, we can see that the value of $R(\rho)$ is always bigger than $\rho$. Furthermore, we can see that $R(\rho)$ is strictly increasing as function of the similarity measure $\rho$, where $R(1)=1$ and $R(0)=0$. This shows that the generalization error associated with the multi--task formulation improves as we increase the similarity measure.

\section{General Multi--Task Formulation}\label{gntasks}
In this section, we analyze the multi--task formulation for \emph{general} number of samples $\lbrace n_t \rbrace_{1 \leq t \leq T}$. We provide a precise characterization of the generalization error for general $\lbrace \kappa_t \rbrace_{1 \leq t \leq T}$. Before stating our theoretical predictions, we need a few definitions.
\subsection{Definitions}
We start by defining the asymptotic limit corresponding to the multi--task formulation. Specifically, define the following deterministic optimization problem
\begin{align}\label{det_form_m}
\min_{q_t,r_t \geq 0}\max_{\substack{\eta_t\\ \mC \succ 0}}&~\frac{1}{2}\sum_{t=1}^{T}(\gamma_1-\eta_t)(q_t^2+r_t^2)+\frac{1}{2}\vq^\top\mB^{-1}\vq 
\nonumber \\ &+\sum_{t=1}^T\mathbb{E} \Big[ \mathcal{M}_{\ell(Y_{t};.)}(r_t H_{t}+q_t S_{t}; \kappa_t C_{tt}^{-1}) \Big],
\end{align}
where, here, the vector $\vq\in\mathbb{R}^T$ is formed by the concatenation of the variables $\lbrace q_t \rbrace_{1 \leq t \leq T}$. Furthermore, the scalar $C_{tt}^{-1}$ denotes the $t^{th}$ diagonal element of the matrix $\mC^{-1}$, where the matrix $\mC\in\mathbb{R}^{T\times T}$ is defined as follows
\begin{align}\label{matC_m}
\begin{cases}
\mC_{ii}=\frac{(T-1)\gamma_2}{T}+\eta_i,~\forall i\in\lbrace 1,\dots,T \rbrace, \\
\mC_{ij}=-\frac{\gamma_2}{T},~\forall i,j\in\lbrace 1,\dots,T \rbrace, i\neq j.
\end{cases}
\end{align}
In addition, the matrix $\mB\in\mathbb{R}^{T\times T}$ is defined as $\mB=\mC^{-1} \circ \mL$, where $\circ$ denotes the Hadamard product, and the matrix $\mL\in\mathbb{R}^{T\times T}$ is given as
\begin{align}
\begin{cases}
\mL_{ii}=1,~\forall i\in\lbrace 1,\dots,T \rbrace, \\
\mL_{ij}=\rho,~\forall i,j\in\lbrace 1,\dots,T \rbrace,~i\neq j.
\end{cases}
\end{align}
The expectation in the cost of the loss in \eqref{det_form_m} is over the standard Gaussian random variables $H_t$ and $S_t$ and the random variable $Y_t$, which is defined as
\begin{align}
Y_t=\varphi \bigg(\frac{1}{\sqrt{\rho}} \left[S_t\sqrt{\frac{\kappa_t}{\alpha_t}} + Z_t  \sqrt{1-\frac{\kappa_t}{\alpha_t}} \ \right] \bigg),
\end{align}
where $Z_t$ is an independent standard Gaussian random variable.

\subsection{Asymptotic Predictions}
Now, we are ready to state our main theoretical predictions for the multi--task approach employed in \eqref{multi_task}. 
\begin{theorem}[General Multi--Task Analysis]\label{th_gmtask}
Suppose that the assumptions \ref{rad_fv}-\ref{highdim} are satisfied. Then,  the generalization error corresponding to the $t^{th}$ task of the general formulation in \eqref{multi_task} converges in probability as follows 
\begin{align}\label{gen_conv}
{\mathcal{E}}_{p,t,\text{test}} \xrightarrow{p \to \infty} \frac{1}{4^\vartheta} \mathbb{E}\left[ \left( \varphi(c_0 G_1) -\widehat{\varphi}(c_{1,t} G_1 + c_{2,t} G_2) \right)^2 \right],
\end{align}
where $G_1$ and $G_2$ are two independent standard Gaussian random variables.
Furthermore, $c_{1,t}$ and $c_{2,t}$ are given as
\begin{align}
c_{1,t}=q_t^\star \sqrt{\frac{\kappa_t}{\alpha_t}} ,~ \text{and} \ c_{2,t}=\sqrt{ \left(1-\frac{\kappa_t}{\alpha_t} \right) (q_t^\star)^2+(r_t^\star)^2 }.
\end{align}
The terms $r_t^\star$ and $q_{t}^\star$ are the optimal solutions of the scalar formulation given in \eqref{det_form_m}. 
\end{theorem}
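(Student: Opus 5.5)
The plan is to cast \eqref{multi_task} as a min--max problem with a block-diagonal Gaussian design and apply the multivariate CGMT \cite{dhi21inherent,akhtiamov2025novel}. The CGMT is applied once per task block. The coupling across tasks enters only through the deterministic regularizer $\frac{\gamma_2}{2}\sum_t\norm{\vw_t-\bar{\vw}}^2$ and through the correlations among the hidden vectors $\vxi_t$.

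\textbf{Decomposition of the observed features.} For task $t$, write $\mB_t\in\R^{n_t\times k}$ for the matrix with rows $\vb_{t,i}^\top$, so that $\va_{t,i}^\top\vxi_t=\vb_{t,i}^\top\vxi_{t,\mathcal{S}}+\va_{t,i,\mathcal{S}^c}^\top\vxi_{t,\mathcal{S}^c}$. The second term is Gaussian and independent of $\mB_t$. Because the $\vv$'s are uniform on the sphere and $\mathcal{S}$ is random, $\norm{\vxi_{t,\mathcal{S}}}^2\to(1+\sigma^2)\kappa_t/\alpha_t=\kappa_t/(\rho\alpha_t)$ and $\norm{\vxi_{t,\mathcal{S}^c}}^2\to(1-\kappa_t/\alpha_t)/\rho$. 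These limits explain the form of $Y_t$. In addition, $\bar{\vxi}_{t,\mathcal{S}}^\top\bar{\vxi}_{s,\mathcal{S}}\to\rho$ for $t\neq s$, which is the origin of the matrix $\mL$.

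\textbf{Min--max formulation and CGMT.} Next, decompose $\vw_t=q_t\,\bar{\vxi}_{t,\mathcal{S}}\cdot(\text{scale})+\vw_t^\perp$, projecting onto $\bar{\vxi}_{t,\mathcal{S}}$. This makes $\mB_t\vw_t^\perp$ independent of the labels. Introduce the Lagrangian variables $\vu_t$ for the constraints $\vz_t=\mB_t\vw_t/\sqrt{n_t}$ and write the loss via its convex conjugate. This gives a convex--concave problem with bilinear terms $\vu_t^\top\mB_t^\perp\vw_t^\perp$ for independent Gaussian blocks. The multivariate CGMT replaces each block by $\norm{\vw_t^\perp}\vg_t^\top\vu_t+\norm{\vu_t}\vh_t^\top\vw_t^\perp$. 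I then optimize the directions of $\vu_t$, setting $\beta_t=\norm{\vu_t}$ and $r_t=\norm{\vw_t^\perp}$, and apply the square-root trick $\sqrt{x}=\min_{\tau>0}\frac{x}{2\tau}+\frac{\tau}{2}$. The loss terms then converge, by the law of large numbers, to $\mathbb{E}[\mathcal{M}_{\ell(Y_t;.)}(r_tH_t+q_tS_t;\cdot)]$.

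\textbf{Main obstacle.} The hard part is the optimization over the directions of $\{\vw_t^\perp\}$. Their coupling through $\bar{\vw}$ and the terms $-\beta_t\vh_t^\top\vw_t^\perp$ give a quadratic problem in $\R^{kT}$ with Hessian $\mC\otimes\mI_k$, where the $\eta_t$ appear as multipliers enforcing $\norm{\vw_t^\perp}=r_t$. The projections onto $\bar{\vxi}_{t,\mathcal{S}}$ are coupled across tasks, with overlap $\rho$ off the diagonal. Minimizing these explicitly, I expect the $q$-dependent part to take the form $\frac12\vq^\top(\mC^{-1}\circ\mL)^{-1}\vq$, with the Moreau parameter reducing to $\kappa_tC^{-1}_{tt}$ after eliminating $\beta_t$ and $\tau_t$. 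This linear-algebra step is delicate: it requires completing the square with the Kronecker structure, handling the non-orthogonality of the $\bar{\vxi}_{t,\mathcal{S}}$, and verifying $\mC\succ0$. I would first carry it out in the symmetric case to recover \eqref{det_form_asy} as a check.

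\textbf{Convergence of the generalization error.} Finally, I invoke the convexity and uniqueness arguments of the CGMT. The limiting scalar problem is strictly convex in $(q_t,r_t)$, and the deviation argument transfers concentration of $(q_t,r_t)$ from the auxiliary optimization to the primal one. For a fresh test point, $(\vxi_t^\top\va_{t,\text{new}},\widehat{\vbeta}_t^\top\va_{t,\text{new}})$ is jointly Gaussian with covariance determined by $\norm{\vxi_t}^2$, $\vxi_{t,\mathcal{S}}^\top\widehat{\vw}_t$ and $\norm{\widehat{\vw}_t}^2$. These converge to $c_0^2$, $c_0c_{1,t}$ and $c_{1,t}^2+c_{2,t}^2$ respectively, which yields \eqref{gen_conv} by continuity of the expectation.
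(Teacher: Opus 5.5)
Your proposal is correct and follows essentially the same route as the paper's appendix. You split the design matrix along $\bar{\vxi}_{ts}$, apply the multivariate CGMT, fix $(q_t,r_t)$, solve the coupled direction problem as a quadratic in $\R^{kT}$ with Hessian $\mC\otimes\mI_k$, and finish with the bivariate-Gaussian reduction of the test error. The paper keeps $\vw$ free and attaches multipliers $\eta_t$ to $\norm{\vw_t}^2=q_t^2+r_t^2$ and $\lambda_t$ to $\bar{\vxi}_{ts}^\top\vw_t=q_t$. The non-orthogonality then appears only through $\bar{\vxi}_{is}^\top\bar{\vxi}_{js}\to\rho$, and maximizing $-\vlambda^\top\vq-\frac12\vlambda^\top\mB_p\vlambda$ gives exactly your $\frac12\vq^\top(\mC^{-1}\circ\mL)^{-1}\vq$. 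The only other difference is cosmetic: you scalarize $\vu_t$ via its norm and the square-root trick, while the paper maximizes over the full $\vu_t$ with the conjugate identity to get the Moreau envelope with parameter $\kappa_tC_{tt}^{-1}$ directly; eliminating your $\beta_t,\tau_t$ yields the same scalar problem.
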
 
\begin{proof}
The proof of the asymptotic results stated in Theorem \ref{th_gmtask} is based on an extended version of the CGMT framework \cite{dhi21inherent}, and the theoretical results in \cite{trustsub,andersen1982,Newey94,Dabah}. 
To streamline our presentation, we postpone the details to the appendix.
\end{proof}
The result stated in Theorem \ref{th_gmtask} is a generalized version of the predictions given in Theorem \ref{lem_same}. Specifically, the results in Theorem \ref{th_gmtask} are valid for any choice of  $\lbrace \kappa_t \rbrace_{1 \leq t \leq T}$. Our analysis shows that the deterministic problem in \eqref{det_form_m} is the asymptotic limit of the multi--task formulation. Moreover, it shows that the formulation in \eqref{det_form_m} is strictly convex in the minimization variables. This proves the uniqueness of the solutions of the problem in \eqref{det_form_m}.

Next, we give a simulation example to validate the results stated in Theorem \ref{th_gmtask}. We consider the binary classification model and the squared loss function. Figure \ref{figsim5} considers two tasks. It also assumes that the training data size of the first task is two times more the training data size of the second task.  We can see from Figure \ref{figsim5} that the predictions in Theorem \ref{th_gmtask} are in excellent agreement with the actual performance of the multi--task formulation.
\begin{figure}[h!]
    \centering
    \subfigure[]{
        \includegraphics[width=0.48\linewidth]{ 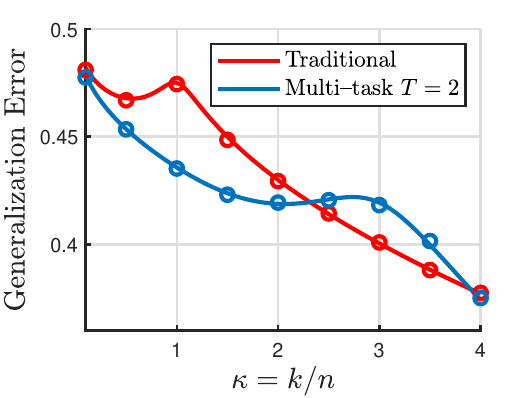}}
    \subfigure[]{
    \includegraphics[width=0.48\linewidth]{ 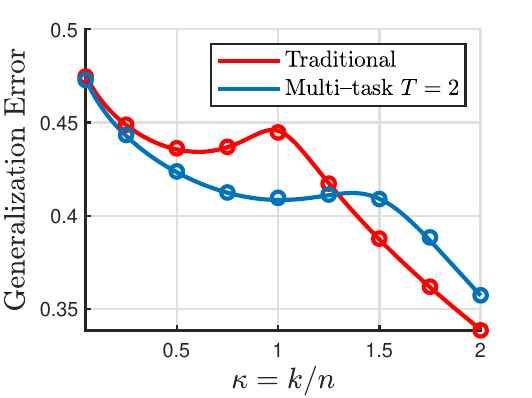}}
     
    \caption{Solid lines: Theoretical predictions in Theorem 2. Circles: numerical simulation for the multi--task formulation. We consider two tasks in the multi--task formulation. The parameters are set as follows $p=2000$, $\alpha=4$, $\rho=0.7$, $T=2$, $\gamma_1=0.005$ and $\gamma_1=1$. Moreover, we take $\alpha_1=\alpha$ and $\alpha_2=\alpha/2$. {\bf (a)} The performance of the first task. {\bf (b)} The performance of the second task.  The results are averaged over $100$ independent Monte Carlo trials. 
    }
        \label{figsim5}
\end{figure}
This provides an empirical verification of the results stated in Theorem \ref{th_gmtask}. Moreover, observe that the generalization error corresponding to the multi--task formulation exhibits the same qualitative behavior as for the symmetric formulation. Specifically, we can see that the double descent peak is postponed and the multi--task formulation improves the generalization performance for small $\kappa$.
\section{Additional Numerical Investigations}\label{sim_exp}
In this part, we provide additional simulation examples to empirically verify our theoretical predictions derived in the previous parts. 

In the first simulation example, we verify the results stated in Corollary \ref{cor_reg}. Specifically, we verify that the asymptotic performance of the separate formulation, where $R(\rho)$ is selected according to \eqref{fixed_equ}, can be precisely predicted by solving the deterministic formulation in \eqref{smulti_task_asy}.
\begin{figure}[h!]
    \centering
    \subfigure[]{
        \includegraphics[width=0.48\linewidth]{ 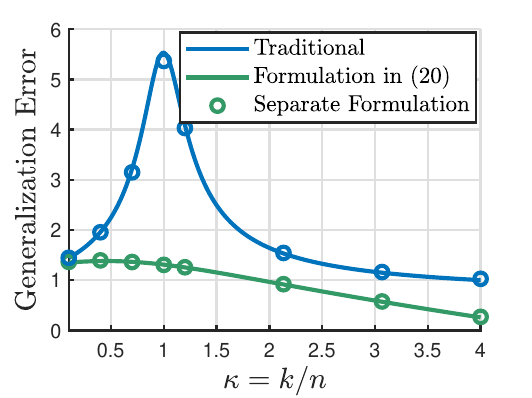}}
    \subfigure[]{
    \includegraphics[width=0.48\linewidth]{ 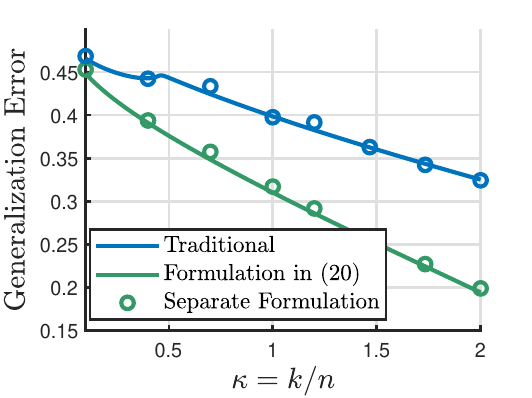}}
     
    \caption{Performance of the first task. Continuous lines: Theoretical predictions. Circles: Numerical simulations for the traditional and separate formulations. {\bf (a)} We consider the linear regression model and the squared loss. The parameters are set as follows $p=1000$, $\alpha=4$, $\gamma_1=0.01$, $\gamma_2=0.6$ and $\rho=0.75$. {\bf (b)} We consider the binary classification model and the logistic loss. The parameters are set as follows $p=1000$, $\alpha=2$, $\gamma_1=10^{-4}$, $\gamma_2=0.4$ and $\rho=0.6$. The results are averaged over $50$ independent Monte Carlo trials.
}
        \label{figsim6}
\end{figure}
Figure \ref{figsim6} considers the linear regression model with the squared loss and the binary classification model with the logistic loss. First, we can notice that the performance of the separate formulation introduced in \eqref{smulti_task} is in excellent agreement with the performance of the scalar formulation in \eqref{det_form_asy2}, even for a moderate problem dimensions. In addition, Figure \ref{figsim6} illustrates that the combination of large number of tasks significantly improves the generalization error and mitigates the double descent phenomenon. Essentially, it leads to a strictly decreasing generalization error as a function of the problem parameter $\kappa$.

In the second simulation example, we consider the binary classification model with the squared loss. Moreover, we consider four task in the formulation in \eqref{multi_task}. The first two tasks have the same training data size. Also, the third and fourth tasks have half the training data size of the first two tasks. Figure \ref{figsim7} first validates the results stated in Theorem \ref{th_gmtask}. This can be achieved by observing that they are in excellent agreement with the actual performance of \eqref{multi_task}, even for a moderate problem dimensions. Moreover, Figure \ref{figsim7} empirically studies the performance of the general multi--task formulation. Figure \ref{figsim7a} first shows that the generalization error corresponding to the first task improves as we increase the similarity measure $\rho$. Also, note that the traditional formulation is better than the multi--task formulation for a small similarity between the tasks. 
\begin{figure}[h!]
    \centering
    \subfigure[]{\label{figsim7a}
        \includegraphics[width=0.48\linewidth]{ 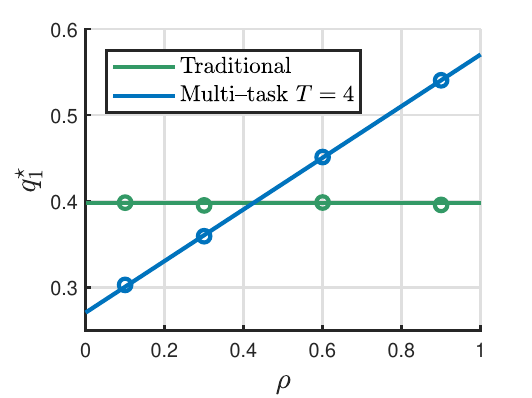}}
    \subfigure[]{\label{figsim7b}
    \includegraphics[width=0.48\linewidth]{ 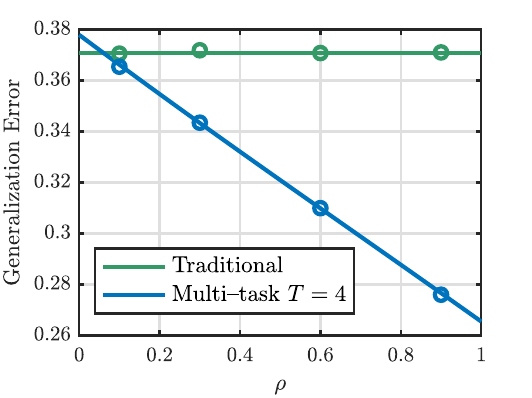}}
     
    \caption{Performance of the first task. Continuous lines: Theoretical predictions. Circles: Numerical simulations for the traditional and multi--task formulations. We consider the binary classification model and the squared loss. The parameters are set as follows $p=2000$, $\alpha_1=\alpha_2=2$, $\alpha_3=\alpha_4=1$, $\kappa_1=\kappa_2=1.5$, $\kappa_3=\kappa_4=0.75$, $\gamma_1=0.1$ and $\gamma_2=1$. {\bf (a)} The behavior of the optimal value $q_1^\star$. {\bf (b)} The behavior of the generalization error. The results are averaged over $50$ independent Monte Carlo trials.}
        \label{figsim7}
\end{figure}
Figure \ref{figsim7b} shows that the optimal value $q_1^\star$ increases as we increase the similarity measure. This suggests that the general multi--task formulation favors solutions aligned with the generative model in \eqref{mt_model}. This also suggests that the regularization properties of the general multi--task formulation exhibit the same qualitative behavior as for the symmetric formulation stated in Corollary \ref{cor_reg}. 

\section{Conclusion}\label{concd}
In this paper, we precisely analyzed a popular multi--task formulation. Specifically, we provided an exact characterization of the generalization error corresponding to the considered multi--task formulation. The predictions are based on a multivariate version of the CGMT framework. Our precise results are then used to study the regularization effects of the considered multi--task formulation. Particularly, we showed that the multi--task formulation is asymptotically equivalent to a traditional formulation with an additional regularization that favors solutions aligned with the generative model. Moreover, we empirically studied the impact of combining tasks on the generalization error. In particular, it has been empirically shown that the combination of multiple tasks postpones the double descent phenomenon and can mitigate it asymptotically.
\appendix
\section{Appendix}\label{tech_deta}
In this appendix, we provide a proof outline of the theoretical results stated in this paper. 
Our analysis is based on an extended version of the CGMT framework referred to as the multivariate CGMT. The analysis is valid under the assumptions \ref{rad_fv}-\ref{highdim}. Our approach is to prove the general results in Theorem \ref{th_gmtask}. Then, specialize the results in Theorem \ref{th_gmtask} to the settings considered in Theorem \ref{lem_same}. 
\subsection{Multivariate Convex Gaussian Min-max Theorem}
To rigorously prove the technical results stated in Theorem \ref{th_gmtask}, we use an extended version of the CGMT framework, that is called the multivariate convex Gaussian min-max theorem (MCGMT) \cite{dhi21inherent}. The MCGMT replaces the high--dimensional analysis of a generally hard primary problem with a simpler formulation. In this paper, we consider primary problems of the form 
\begin{equation}\label{MPO}
\Phi_p=\min\limits_{\vw \in \mathcal{S}_{\vw}} \max\limits_{\vu\in\mathcal{S}_{\vu}} \sum_{t=1}^{T} \vu_t^\top \mG_t \vw_t + \Upsilon(\vw,\vu),
\end{equation}
where $\vu_t \in \mathbb{R}^{n_t}$ and $\vw_t \in \mathbb{R}^{k}$ are optimization variables, and $\mG_t \in \mathbb{R}^{n_t\times k}$ has independent standard Gaussian random components, for any $t\in\lbrace 1,\dots,T \rbrace$. Additionally, the vectors $\vw$ and $\vu$ are formed by the concatenation of the vectors $\lbrace \vw_t \rbrace_{t=1}^{T}$ and $\lbrace \vu_t \rbrace_{t=1}^{T}$, respectively. We refer to the formulation in \eqref{MPO} as the multivariate primary optimization (MPO). 
Then, the corresponding multivariate auxiliary optimization (MAO) is given by
\begin{align}\label{MAO}
\phi_p=\min\limits_{\vw \in \mathcal{S}_{\vw}} \max\limits_{\vu\in\mathcal{S}_{\vu}} &\sum_{t=1}^{T} \norm{\vu_t} \vg_t^\top \vw_t + \sum_{t=1}^{T} \norm{\vw_t} \vh_t^\top \vu_t + \Upsilon(\vw,\vu),
\end{align}
where $\vg_t\in\mathbb{R}^{k}$ and $\vh_t\in\mathbb{R}^{n_t}$ are independent standard Gaussian random vectors, for any $t\in\lbrace 1,\dots,T \rbrace$. Here, we assume that $\mG_t \in\mathbb{R}^{ n_t \times  k}$, $\vg_t \in \mathbb{R}^{ k}$, and $\vh_t\in\mathbb{R}^{ n_t}$ are all independent of each other, the feasibility sets $\mathcal{S}_{\vw}\subset\R^{ Tk}$ and $\mathcal{S}_{\vu}\subset\R^{ n}$ are convex and compact, and the function $\Upsilon: \mathbb{R}^{Tk} \times \mathbb{R}^{ {n}} \to \mathbb{R}$ is continuous \emph{convex-concave} on $\mathcal{S}_{\vw}\times \mathcal{S}_{\vu}$, where $n = \sum_{t=1}^T n_t$.
The following theorem shows that the optimization problems in \eqref{MPO} and \eqref{MAO} are equivalent in the large system limit. 
\begin{theorem}[MCGMT \cite{dhi21inherent}]\label{mcgmt}
For any fixed $T \geq 1$, define the open set $\mathcal{S}_{p}$.
Moreover, define the set $\mathcal{S}^c_{p}={\mathcal{S}}_{\vw} \setminus \mathcal{S}_{p}$. Let $\phi_p$ and $\phi^c_{p}$ be the optimal cost values of the MAO formulation in \eqref{MAO} with feasibility sets ${\mathcal{S}}_{\vw}$ and $\mathcal{S}^c_{p}$, respectively. Assume that the following properties are all satisfied
\begin{enumerate}
\item There exists a constant $\phi$ such that the optimal cost $\phi_p$ converges in probability to $\phi$ as $p$ goes to $+\infty$.
\item There exists a constant $\phi^c$ such that the optimal cost $\phi^c_{p}$ converges in probability to $\phi^c$ as $p$ goes to $+\infty$.
\item There exists a positive constant $\zeta>0$ such that $\phi^c \geq \phi+\zeta$.
\end{enumerate}
Then, the following convergence in probability holds
\begin{equation}
\abs{\Phi_p -\phi_p} \overset{p\to\infty}{\longrightarrow} 0, \ \ \text{and} \ \ \mathbb{P}( \widehat{\vw}_{p} \in \mathcal{S}_{p} )  \overset{p\to\infty}{\longrightarrow} 1,\nonumber
\end{equation}
where $\Phi_p,$ and $\widehat{\vw}_{p}$ are the optimal cost and the optimal solution of the MPO formulation in \eqref{MPO}.
\end{theorem}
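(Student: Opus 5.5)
The plan is to follow the two-step architecture of the standard CGMT proof of \cite{chris:151}, applying Gordon's comparison inequality block by block. Step one establishes a one-sided comparison between $\Phi_p$ and $\phi_p$ for arbitrary compact sets. Step two upgrades it to a two-sided comparison using convexity. Step three is a localization argument that yields the statement about $\widehat{\vw}_p$.

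\textbf{Step 1: one-sided bound.} For a fixed $T$, introduce independent scalars $z_1,\dots,z_T\sim\mathcal{N}(0,1)$ and define the two Gaussian processes
\begin{align*}
X_{\vw,\vu}&=\sum_{t=1}^{T}\vu_t^\top \mG_t \vw_t+\sum_{t=1}^{T} z_t\norm{\vw_t}\norm{\vu_t},\\
Y_{\vw,\vu}&=\sum_{t=1}^{T}\norm{\vu_t}\vg_t^\top \vw_t+\sum_{t=1}^{T}\norm{\vw_t}\vh_t^\top\vu_t .
\end{align*}
Because the blocks are independent, each covariance splits into a sum over $t$, so the two covariance differences reduce to
\begin{equation*}
\sum_{t=1}^{T}\bigl(\norm{\vw_t}\norm{\vw_t'}-\vw_t^\top\vw_t'\bigr)\bigl(\norm{\vu_t}\norm{\vu_t'}-\vu_t^\top\vu_t'\bigr).
\end{equation*}
This difference is $\geq 0$, with equality whenever $\vw=\vw'$. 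This is exactly Gordon's min-max comparison condition, and here is where the multivariate structure enters: one needs one $z_t$ per block rather than a single scalar. Adding $\Upsilon$, which is deterministic, and passing from finite nets to compact sets by continuity gives $\mathbb{P}(\Phi_p<c)\le \mathbb{P}(\min\max X<c)$. The terms $z_t\norm{\vw_t}\norm{\vu_t}$ are removed by conditioning on the signs of the $z_t$. This costs a factor of $2^T$ instead of $2$, which is harmless because $T$ is fixed. The result is $\mathbb{P}(\Phi_p<c)\le 2^T\,\mathbb{P}(\phi_p\le c)$. The same inequality holds with $\mathcal{S}_{\vw}$ replaced by any compact set, in particular by $\mathcal{S}^c_p$.

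\textbf{Step 2: reverse bound.} Since $\mathcal{S}_{\vw}$ and $\mathcal{S}_{\vu}$ are convex and compact and $\Upsilon$ is convex-concave, Sion's minimax theorem lets me swap min and max in the MPO. Applying Step 1 to $-\Phi_p$, with the roles of $\vw$ and $\vu$ exchanged, gives $\mathbb{P}(\Phi_p>c)\le 2^T\,\mathbb{P}(\phi_p\ge c)$. Combined with assumption (1), this yields $\abs{\Phi_p-\phi}\to 0$, and hence $\abs{\Phi_p-\phi_p}\to 0$, in probability.

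\textbf{Step 3: localization.} Let $\Phi^c_p$ denote the MPO cost restricted to $\mathcal{S}^c_p$. On the event $\widehat{\vw}_p\notin\mathcal{S}_p$ we have $\Phi_p=\Phi^c_p$. Step 1 requires no convexity, so it applies to the restricted problem and gives $\mathbb{P}(\Phi^c_p<\phi+\zeta/2)\le 2^T\,\mathbb{P}(\phi^c_p\le\phi+\zeta/2)$. By assumptions (2) and (3), this probability tends to $0$. Meanwhile Step 2 gives $\Phi_p<\phi+\zeta/2$ with probability tending to $1$. Therefore $\mathbb{P}(\widehat{\vw}_p\in\mathcal{S}_p)\to 1$.

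The main obstacle is Step 1. It requires verifying Gordon's inequality with block-separable covariances, and justifying the discretization and continuity argument together with the sign-conditioning trick for $T$ independent $z_t$'s. Everything after that is routine given the CGMT template.
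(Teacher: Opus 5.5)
Your proposal is correct. The paper gives no proof of its own and simply imports the result from \cite{dhi21inherent}; your argument (the CGMT template of \cite{chris:151} run blockwise with one auxiliary Gaussian $z_t$ per block, giving the $2^T$ factor, then the Sion reversal, then localization on the compact set $\mathcal{S}^c_p$) is the standard route to that result. Two small clean-ups: in Step 1, Gordon's comparison gives $\mathbb{P}(\min\max(X+\Upsilon)<c)\le\mathbb{P}(\phi_p<c)$, while the sign-conditioning gives $\mathbb{P}(\Phi_p<c)\le 2^T\,\mathbb{P}(\min\max(X+\Upsilon)<c)$, so the unweighted inequality you wrote is not what either step yields; and in Step 2 you need weak duality, $\min_{\vu}\max_{\vw}\geq\max_{\vw}\min_{\vu}$, on the swapped auxiliary problem (which is not convex--concave) to return to $\phi_p$.
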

The above theorem allows us to analyze the generally easy MAO formulation given in \eqref{MAO} to infer asymptotic properties of the generally hard MPO problem in \eqref{MPO}.

\subsection{Sharp Asymptotic Analysis of the Multi--Task Formulation}
In this part, we provide a sharp asymptotic analysis of the multi--task formulation given in \eqref{multi_task}. Particularly, we use the MCGMT to sharply analyze the following optimization problem
\begin{align}\label{form_new_ana}
\min_{\vw_t\in\mathbb{R}^k}&~\sum_{t=1}^{T} \frac{1}{n_t} \sum_{i=1}^{n_t} \ell \left(y_{t,i};\vb_{t,i}^\top \vw_t \right)+\frac{\gamma_1}{2} \sum_{t=1}^{T} \norm{\vw_t}^2\nonumber \\ &+\frac{\gamma_2}{2} \sum_{t=1}^{T} \norm{ \vw_t-\bar{\vw} }^2,
\end{align}
where $\bar{\vw}$ denotes the average of the optimization vectors $\lbrace \vw_t \rbrace_{t=1}^{T}$, i.e., $\bar{\vw}=\frac{1}{T}\sum_{t=1}^{T} {\vw_t}$.
Our first objective is to express the optimization problem in \eqref{form_new_ana} in the form of a MPO formulation in \eqref{MPO}. Then, apply the MCGMT framework to formulate the corresponding MAO formulation. The final step is to study the asymptotic properties of the obtained MAO.

\subsubsection{Formulating the MAO Problem}
The first step to obtain a multivariate auxiliary formulation is to formulate our optimization problem in the form of the MPO in \eqref{MPO}. To this end, we start by introducing additional optimization variables as follows
\begin{align}\label{form_ana_v1}
\min_{\vw_t\in\mathbb{R}^k}\max_{\vu_t\in\mathbb{R}^{n_t}}&~\sum_{t=1}^{T} \frac{1}{n_t} \sum_{i=1}^{n_t} u_{t,i} \vb_{t,i}^\top \vw_t - \sum_{t=1}^{T} \frac{1}{n_t} \sum_{i=1}^{n_t} \ell^\star \left(y_{t,i};u_{t,i}\right)
\nonumber \\ &+\frac{\gamma_1}{2} \sum_{t=1}^{T} \norm{\vw_t}^2+\frac{\gamma_2}{2} \sum_{t=1}^{T} \norm{ \vw_t-\bar{\vw} }^2.
\end{align}
Here, the function $\ell^\star(y;.)$ denotes the convex conjugate of the loss function $\ell(y;.)$. Define the matrix $\mB_t\in\mathbb{R}^{n_t\times k}$ as the concatenation of the vectors $\lbrace \vb_{t,i}^\top \rbrace_{1\leq i \leq n_t}$. The multivariate version of the CGMT assumes that the feasibility sets of the MPO are convex and compact. Although these properties are not trivial in our case, one can follow the approaches in \cite{chris:151,ouss19,dhifallah2020} to prove that the optimal solutions of the formulation in \eqref{form_ana_v1} belong to convex and compact sets, asymptotically. This implies that one can equivalently formulate the problem in \eqref{form_ana_v1} with convex and compact feasibility sets. In the rest of this paper, we only consider convex feasibility sets where the compactness is assumed implicitly. 
Note that the labels $\lbrace y_{t,i} \rbrace_{1 \leq i \leq n_t}$ depend on the matrix $\mB_t$, therefore, we cannot directly apply the MCGMT. To overcome this issue, we decompose the matrix $\mB_t$ without changing its statistics as follows 
\begin{align}\label{b_dist}
\mB_t&=\mB_t \bar{\vxi}_{ts} \bar{\vxi}_{ts}^\top+\mB_t\mK^\perp_{t} = \vs_t \bar{\vxi}_{ts}^\top+\mG_t\mK^\perp_{t},
\end{align}
where the elements of $\vs_t\in\mathbb{R}^{n_t}$ and $\mG_t\in\mathbb{R}^{n_t\times k}$ are drawn independently from a standard Gaussian distribution, while $\vs_t$ and $\mG_t$ are independent of each other. 
Furthermore, ${\vxi}_{ts}$ denotes the entries of the vector $\vxi_t$ with index in the set $\mathcal{S}$ and $\bar{\vxi}_{ts}$ is the normalized version of ${\vxi}_{ts}$. 
Also, $\mK^\perp_{t}\in\mathbb{R}^{k\times k}$ represents the projection matrix onto the orthogonal complement of the space spanned by the vector $\bar{\vxi}_{ts}$. 
Note that the result in \eqref{b_dist} is equality in distribution. Then, one can formulate the optimization problem \eqref{form_ana_v2} as follows
\begin{align}\label{form_ana_v2}
\min_{\vw_t\in\mathbb{R}^k}\max_{\vu_t\in\mathbb{R}^{n_t}}&~\sum_{t=1}^{T} \frac{1}{n_t} \vu_t^\top \mG_t \mK^\perp_{t} \vw_t + \sum_{t=1}^{T} \frac{1}{n_t} \vu_t^\top \vs_t \bar{{\vxi}}_{ts}^\top \vw_t \nonumber\\
&+\frac{\gamma_1}{2} \sum_{t=1}^{T} \norm{\vw_t}^2 - \sum_{t=1}^{T} \frac{1}{n_t} \sum_{i=1}^{n_t} \ell^\star \left(y_{t,i};u_{t,i}\right) \nonumber\\
&+\frac{\gamma_2}{2} \sum_{t=1}^{T} \norm{ \vw_t-\bar{\vw} }^2.
\end{align}
Note that the formulation in \eqref{form_ana_v2} is in the form of the MPO problem introduced in \eqref{MPO}. Moreover, one can see that the convexity assumption in the MCGMT framework is satisfied. Then, the corresponding MAO formulation can be expressed as follows
\begin{align}\label{form_ana_v3}
&\min_{\vw_t\in\mathbb{R}^k}\max_{\vu_t\in\mathbb{R}^{n_t}}~ \sum_{t=1}^{T} \frac{\norm{\vu_t}}{n_t} \vg_t^\top \mK^\perp_{t} \vw_t+\sum_{t=1}^{T} \frac{1}{n_t}\norm{\mK^\perp_{t} \vw_t} \vh_t^\top \vu_t \nonumber\\
&+\frac{\gamma_2}{2} \sum_{t=1}^{T} \norm{ \vw_t-\bar{\vw} }^2  + \sum_{t=1}^{T} \frac{\vu_t^\top \vs_t \bar{\vxi}_{ts}^\top \vw_t}{n_t} \nonumber\\
&- \sum_{t=1}^{T} \frac{1}{n_t} \sum_{i=1}^{n_t} \ell^\star \left(y_{t,i};u_{t,i}\right)+\frac{\gamma_1}{2} \sum_{t=1}^{T} \norm{\vw_t}^2.
\end{align}
Here, the vectors $\vg_t\in\mathbb{R}^k$ and $\vh_t\in\mathbb{R}^{n_t}$ have components independently drawn from a standard Gaussian distribution. Next, we focus our attention on expressing the MAO formulation in \eqref{form_ana_v3} in terms of scalar variables, then, studying its asymptotic properties.

\subsubsection{Simplifying the MAO Formulation}
We start the analysis of the auxiliary formulation by decomposing the optimization variable $\vw_t$ as follows
\begin{align}\label{wt_decomp}
\vw_t=(\bar{\vxi}_{ts}^\top \vw_t) \bar{\vxi}_{ts} + \mP_{ts} \vr_t,
\end{align}
where $\vr_t\in\mathbb{R}^{k-1}$ is a free vector, and $\mP_{ts}\in\mathbb{R}^{k\times(k-1)}$ is formed by an orthonormal subspace orthogonal to the vector $\vxi_{ts}$. In addition, define the scalar $q_t$ as follows
\begin{align}
q_t=\bar{\vxi}_{ts}^\top \vw_t.
\end{align}
Now, we fix $q_t$ and the norm of $r_t=\norm{\vr_t}$ in the formulation in \eqref{form_ana_v3}. Moreover, we solve over the direction of the optimization vector $\vw_t$. This optimization problem can be formulated as follows
\begin{align}
\min_{\substack{\norm{\vw_t}^2=q_t^2+r_t^2\\q_t=\bar{\vxi}_{ts}^\top \vw_t}}& \sum_{t=1}^{T} \frac{\norm{\vu_t}}{n_t} \vg_t^\top \vw_t
+\frac{\gamma_2}{2} \sum_{t=1}^{T} \norm{ \vw_t-\bar{\vw} }^2 ,
\label{min_wt}
\end{align}
where we drop the terms that are independent of the direction of the vector $\vw_t$.  
In addition, we use the fact that $\frac{1}{n_t} \vg_t^\top\mK^\perp_{t} \vw_t$ is asymptotically equivalent to $\frac{1}{n_t}\vg_t^\top \vw_t$. Note that the optimization in \eqref{min_wt} is not convex due to the norm equality constraint. However, one can use an extended version of the approach proposed in \cite{trustsub} to solve \eqref{min_wt}. Specifically, the formulation in \eqref{min_wt} can be rewritten as
\begin{align}
&\max_{\lambda_t,\eta_t\in\mathcal{F}_t}\min_{{\vw_t}}~\sum_{t=1}^{T} \frac{\norm{\vu_t}}{n_t} \vg_t^\top \vw_t
+\frac{\gamma_2}{2} \sum_{t=1}^{T} \norm{ \vw_t-\bar{\vw} }^2 \nonumber\\
&+\sum_{t=1}^{T}\lambda_t \big(\bar{\vxi}_{ts}^\top \vw_t-q_t \big) +\frac{1}{2}\sum_{t=1}^{T}\eta_t \big(\norm{\vw_t}^2-q_t^2-r_t^2 \big),
\label{min_wt_lag}
\end{align}
{where the optimization variables $\eta_t$ satisfy the regularity conditions in $\mathcal{F}_t$ that will be defined later.}
Here, the variables $\lambda_t$ and $\eta_t$ are the Lagrange multipliers. Next, define the vector $\vw\in\mathbb{R}^{kT}$ to be the concatenation of the optimization vectors $\vw_t$, i.e., $\vw=[\vw_1^\top,\cdots,\vw_T^\top]^\top$. Then, the optimization problem expressed in \eqref{min_wt_lag} can be compactly formulated as follows
\begin{align}\label{Eq:hh}
\max_{\substack{\lambda_t,\eta_t\\ \mC_p \succ 0 }}\min_{{\vw}}~\frac{1}{2} \vw^\top \mC_p \vw+\vb^\top\vw-\frac{1}{2}\sum_{t=1}^{T}\eta_t(q_t^2+r_t^2)-\sum_{t=1}^{T}\lambda_t q_t.
\end{align}
Here, the matrix $\mC_p\in\mathbb{R}^{kT\times kT}$ is defined as follows $\mC_p=\gamma_2\mSig+{\mathbf{\Delta}}$, where the matrix $\mathbf{\Delta}$ is a weighted block diagonal identity matrix, i.e., ${\mathbf{\Delta}}=\text{diag}(\eta_1\mI_k,\cdots,\eta_T\mI_k)$. Additionally, the matrix $\mSig$ is defined as follows $\mSig=\sum_{t=1}^T\mSig_t^\top \mSig_t$. Furthermore, the matrix $\mSig_t\in \mathbb{R}^{k\times kT}$ is expressed as 
\begin{align}
\mSig_t=\left[-\frac{1}{T}\mI_k,\cdots \frac{T-1}{T}\mI_k,\cdots,-\frac{1}{T}\mI_k \right],
\end{align}
where the matrix $\frac{T-1}{T} \mI_k$ is in the $t^{th}$ block.
Here, the positive--definiteness constraint in the above formulation represents the regularity conditions introduced in \eqref{min_wt_lag}, i.e., $\mathcal{F}_t = \{ \eta_t \in \mathbb{R}: \mC_p \succ 0  \}$. In the above formulation, the vector $\vb\in \mathbb{R}^{kT\times 1}$ is defined as follows
\begin{align}
\vb^\top=\left[\frac{\|\vu_1\|}{n_1}\vg_1^\top+\lambda_1 \bar{\vxi}_{1s}^\top, \cdots,\frac{\|\vu_T\|}{n_T}\vg_T^\top+\lambda_T \bar{\vxi}_{Ts}^\top \right].
\end{align}
Now, we are in a position to simplify the formulation in \eqref{Eq:hh} over the optimization vector $\vw$. Note that the formulation is now convex in $\vw$. Moreover, it can be simplified as follows
\begin{align}\label{min_wt_sc}
\max_{\substack{\lambda_t,\eta_t\\ \mC_p \succ 0 }}&- \frac{1}{2} \vb^\top\mC_p^{-1} \vb-\frac{1}{2}\sum_{t=1}^{T}\eta_t(q_t^2+r_t^2)-\sum_{t=1}^{T}\lambda_tq_t.
\end{align}
Note that the above steps simplify the optimization problem in \eqref{Eq:hh} to a scalar formulation as given in \eqref{min_wt_sc}. This implies that the MAO formulation obtained in \eqref{form_ana_v3} can be equivalently reformulated as follows
\begin{align}\label{form_ana_v4}
\min_{\vq,\vr\geq 0}&\max_{\substack{\veta,\vlambda \\ \mC_p \succ 0,\vu_t\in\mathbb{R}^{n_t}}}~ \sum_{t=1}^{T} \frac{1}{n_t}r_t \vh_t^\top \vu_t   
+ \sum_{t=1}^{T} \frac{q_t\vu_t^\top \vs_t }{n_t}- \frac{1}{2}\vz^\top\mC_p^{-1}\vz \nonumber\\
&- \sum_{t=1}^{T} \frac{1}{n_t} \sum_{i=1}^{n_t} \ell^\star \left(y_{t,i};u_{t,i}\right)-\sum_{t=1}^{T}\frac{\|\vu_t\|^2}{2 n_t}V_{p,t}(\veta)\nonumber\\
&-\vlambda^\top \vq +\frac{1}{2}\sum_{t=1}^{T}(\gamma_1-\eta_t)(q_t^2+r_t^2).
\end{align}
Here, the vector $\vq \in\mathbb{R}^T$ and $\vr\in\mathbb{R}^T$ are formed by the concatenation of $\lbrace  q_t \rbrace_{1\leq t \leq T}$ and $\lbrace  r_t\rbrace_{1\leq t \leq T}$, respectively. Moreover, the vector $\veta\in\mathbb{R}^T$ and $\vlambda\in\mathbb{R}^T$ are formed by the concatenation of $\lbrace  \eta_t\rbrace_{1\leq t \leq T}$ and $\lbrace  \lambda_t\rbrace_{1\leq t \leq T}$, respectively. Also, the function $V_{p,t}(\cdot)$ can be expressed as follows
\begin{align}
V_{p,t}(\veta)=\frac{1}{n_t}\vg_t^\top\mC_{p,tt}^{-1}\vg_t.
\end{align}
Additionally, the vector $\vz\in\mathbb{R}^{kT}$ is defined as $\vz=[\lambda_1\bar{\vxi}_{1s}^\top,...,\lambda_T\bar{\vxi}_{Ts}^\top]^\top$,  and $\mC_{p,tt}^{-1}\in\mathbb{R}^{k\times k}$ denotes the $t^{th}$ diagonal block of the matrix $\mC_p^{-1}$. Here, the formulation in \eqref{form_ana_v4} is obtained after dropping terms that converge in probability to zero. This result can be justified by proving that these functions also converge uniformly in probability to the same limit \cite{dhifallah2020}. It remains to simplify the formulation in \eqref{form_ana_v4} over the optimization vector $\lbrace \vu_t \rbrace_{1\leq t \leq T}$. To this end, using the property in \cite[Example 11.26]{var_prog}, we have the following equivalent representation
\begin{align}
&\max_{\substack{\vu_t}}  \frac{1}{n_t}(r_t \vh_t+q_t\vs_t)^\top \vu_t   -\frac{\|\vu_t\|^2}{2n_t}V_{p,t}(\veta) -  \frac{1}{n_t} \sum_{i=1}^{n_t} \ell^\star \left(y_{t,i};u_{t,i}\right)\nonumber\\&=\frac{1}{n_t}\sum_{i=1}^{n_t} \mathcal{M}_{\ell(y_{t,i};.)}\big(r_t h_{t,i}+q_t s_{t,i};V_{p,t}(\veta)\big).
\end{align}
Note that the above equality transforms a vector optimization to a sum of separable scalar formulations. This implies that the MAO formulation expressed in \eqref{form_ana_v4} can be equivalently formulated as follows
\begin{align}
\label{AO_117}
&\min_{\vq,\vr\geq 0}\max_{\substack{\veta,\vlambda\\\mC_p \succ 0}}~\frac{1}{2}\sum_{t=1}^{T}(\gamma_1-\eta_t)(q_t^2+r_t^2)- \vlambda^\top \vq- \frac{1}{2}\vz^\top\mC_p^{-1}\vz \nonumber\\
&+\sum_{t=1}^T\frac{1}{n_t}\sum_{i=1}^{n_t} \mathcal{M}_{\ell(y_{t,i};.)}\big(r_t h_{t,i}+q_t s_{t,i};V_{p,t}(\veta)\big).
\end{align}
Note that our MAO problem is now expressed in terms of scalar optimization variables. Here, our final step is to solve over the variable $\vlambda$. 
Then, the MAO formulation obtained in \eqref{AO_117} can be rewritten as
\begin{align}\label{com_form}
&\min_{\vq,\vr \geq 0}\max_{\substack{\veta,\vlambda\\\mC_p \succ 0}}~\frac{1}{2}\sum_{t=1}^{T}(\gamma_1-\eta_t)(q_t^2+r_t^2)-\vlambda^\top\vq -\frac{1}{2}\vlambda^\top\mB_p\vlambda  \nonumber\\
&+\sum_{t=1}^T\frac{1}{n_t}\sum_{i=1}^{n_t} \mathcal{M}_{\ell(y_{t,i};.)}\big(r_t h_{t,i}+q_t s_{t,i};V_{p,t}(\veta)\big).
\end{align}
In the above, the matrix $\mB_p\in\mathbb{R}^{T\times T}$ has $(i,j)^{th}$ component defined as 
\begin{align}\label{matB}
B_{ij}= \bar{\vxi}_{is}^\top\mC^{-1}_{p,ij}\bar{\vxi}_{js},~\forall i,j\in\lbrace 1,\dots,T \rbrace,
\end{align}
where $\mC^{-1}_{p,ij} \in \mathbb{R}^{k\times k}$ represents the $(i,j)^{th}$  block of the matrix $\mC_p^{-1}$. Using the compact formulation in \eqref{com_form}, one can easily solve the optimization over the vector $\vlambda$.
Particularly, the multivariate auxiliary formulation expressed in \eqref{AO_117} can be equivalently formulated as
\begin{align}\label{scalar_form}
\min_{\vq,\vr \geq 0}&\max_{\substack{\veta,\mC_p \succ 0}}\frac{1}{2}\sum_{t=1}^{T}(\gamma_1-\eta_t)(q_t^2+r_t^2)+\frac{1}{2}\vq^\top\mB_p^{-1}\vq 
\nonumber\\
&+\sum_{t=1}^T\frac{1}{n_t}\sum_{i=1}^{n_t} \mathcal{M}_{\ell(y_{t,i};.)}\big(r_t h_{t,i}+q_t s_{t,i};V_{p,t}(\veta)\big).
\end{align}
Note that the above analysis expresses the multivariate auxiliary formulation in \eqref{form_ana_v3} in terms of scalar variables as given in \eqref{scalar_form}. Then, it remains to study the asymptotic properties of the formulation in \eqref{scalar_form}. We refer to this problem as the scalar formulation.

\subsubsection{Asymptotic Analysis of the Scalar Formulation} 
In this part, we study the asymptotic properties of the scalar formulation expressed in \eqref{scalar_form}. Note that the matrix $\mC_p$ is formed by weighted block identity matrices. This means that the spectrum of the matrix $\mC_p$ can be fully characterized by analyzing the matrix $\mC\in\mathbb{R}^{T\times T}$ defined as follows
\begin{align}\label{matC}
\begin{cases}
&\mC_{ii}=\frac{(T-1)\gamma_2}{T}+\eta_i,~\forall i\in\lbrace 1,\dots,T \rbrace, \\
&\mC_{ij}=-\frac{\gamma_2}{T},~\forall i,j\in\lbrace 1,\dots,T \rbrace, i\neq j.
\end{cases}
\end{align}
Given the form of the matrix $\mC_p$, the matrix $\mC^{-1}_p$ is formed by weighted block identity matrices. The weights can be obtained by computing the inverse of the matrix $\mC$ defined above. Then, the components of the matrix $\mB_p$ defined in \eqref{matB} converges in probability to the components of the matrix $\mB\in\mathbb{R}^{T\times T}$ defined as $\mB=\mC^{-1} \circ \mL$, where $\circ$ denotes the Hadamard product, and the matrix $\mL\in\mathbb{R}^{T\times T}$ is defined as follows
\begin{align}
\begin{cases}
&\mL_{ii}=1,~\forall i\in\lbrace 1,\dots,T \rbrace, \\
&\mL_{ij}=\frac{1}{1+\sigma^2},~\forall i,j\in\lbrace 1,\dots,T \rbrace,~i\neq j .
\end{cases}
\end{align}
Based on the asymptotic results proven in \cite{Dabah}{}, the sequence of random function $V_{p,t}(\cdot)$ converges in probability as follows
\begin{align}\label{fVt}
V_{p,t}(\veta) \xrightarrow{p \to \infty} V_t(\veta) = \kappa_t C_{tt}^{-1}.
\end{align}
Here, the scalar $C_{tt}^{-1}$ denotes the $t^{th}$ diagonal element of the matrix $\mC^{-1}$ defined in \eqref{matC}.
Additionally, using the weak law of large numbers, one can show that the empirical average of the Moreau envelope function in \eqref{scalar_form} converges to the following deterministic function
\begin{align}
{F}_t(q_t,r_t,\veta)=\mathbb{E} \left[ \mathcal{M}_{\ell(Y_{t};.)} \left(r_t H_{t}+q_t S_{t};V_{t}(\veta) \right) \right],
\end{align}
where the expectation is over the standard Gaussian random variables $H_t$, $S_t$, and the random variable $Y_t$.
Also, the function $V_t(\cdot)$ is the asymptotic function defined in \eqref{fVt}. Additionally, the random variable $Y_t$ is expressed as
\begin{align}
Y_t=\varphi \bigg(\sqrt{1+\sigma^2} \left[ S_t \sqrt{\frac{\kappa_t}{\alpha_t}} +Z_t \sqrt{1-\frac{\kappa_t}{\alpha_t}} \ \right] \bigg),
\end{align}
where $S_t$ and $Z_t$ are two independent standard Gaussian random variables. 
The above analysis shows that the cost function of the scalar version of the auxiliary problem defined in \eqref{scalar_form} converges in probability to the cost function of the following deterministic formulation
\begin{align}\label{det_form}
\min_{\vq,\vr \geq 0}\max_{\substack{\veta,\mC \succ 0}}&~\frac{1}{2}\sum_{t=1}^{T}(\gamma_1-\eta_t)(q_t^2+r_t^2)+\frac{1}{2}\vq^\top\mB^{-1}\vq 
\nonumber\\
&+\sum_{t=1}^T\mathbb{E} \left[ \mathcal{M}_{\ell(Y_{t};.)}(r_t H_{t}+q_t S_{t};V_{t}(\veta)) \right].
\end{align}
A first observation is that the deterministic problem in \eqref{det_form} is not separable over the $T$ tasks given that the matrix $\mB$ is not a diagonal matrix. Now that we have obtained the asymptotic scalar formulation, it remains to study the asymptotic behavior of the generalization error corresponding to each task.

\subsubsection{Asymptotic Characterization of the Generalization Error}
In this part, we study the asymptotic properties of the generalization error corresponding to the multi--task approach employed in \eqref{multi_task}. The generalization error corresponding to the $t^{th}$ task can be expressed as follows
\begin{align}\label{testerr_ana}
\mathcal{E}_{p,t,\text{test}}&=\frac{1}{4^\vartheta} \mathbb{E}\left[ \left( \varphi(\va_{t,\text{new}}^\top\vxi_t) -\widehat{\varphi}(\widehat{\vbeta}_t^\top \va_{t,\text{new}}) \right)^2 \right],
\end{align}
where $\va_{t,\text{new}}$ is a new test input vector corresponding to the $t^{th}$ task, and $\widehat{\vbeta}_t$ is as defined in \eqref{beta_hat}.
Now, define the following two random variables
\begin{align}
\nu_1=  \va_{t,\text{new}}^\top\vxi_t,~\text{and}~\nu_2=  \widehat{\vbeta}_t^\top \va_{t,\text{new}}.\nonumber
\end{align}
For a given vectors $\widehat{\vw}_t$ and $\vxi_t$, note that the random variables $\nu_1$ and $\nu_2$ have a bivaraite Gaussian distribution with a zero mean vector and a covariance matrix given as follows
\begin{align}\label{cov_mtx}
\mGm_p=\begin{bmatrix}
\norm{\vxi_t}^2 &  \vxi_t^\top \widehat{\vbeta}_t \\
  \vxi_t^\top\widehat{\vbeta}_t &  \norm{\widehat{\vbeta}_t}^2
\end{bmatrix}.
\end{align}
To precisely analyze the asymptotic behavior of the generalization error, it suffices to analyze the properties of the covariance matrix $\mGm_p$. Define the random variables $\widehat{q}_{p,t}^\star$ and $\widehat{r}_{p,t}^\star$ for the $t^{th}$ task as follows
\begin{align}\label{opt_po}
&\widehat{q}_{p,t}^\star=\bar{\vxi}_{ts}^\top \widehat{\vw}_t,~\text{and}~\widehat{r}_{p,t}^\star=\norm{\mP_{ts}^\top \widehat{\vw}_t },
\end{align}
where the matrix $\mP_{ts}$ is defined in \eqref{wt_decomp}. The decomposition in \eqref{opt_po} shows that the covariance matrix $\mGm_p$ given in \eqref{cov_mtx} can be expressed as follows
\begin{align}
\mGm_p =
\begin{bmatrix}
1+\sigma^2 & \sqrt{1+\sigma^2} \sqrt{\kappa_t/\alpha_t} \widehat{q}^{\star}_{p,t} \\
 \sqrt{1+\sigma^2} \sqrt{\kappa_t/\alpha_t} \widehat{q}^{\star}_{p,t} &  (\widehat{q}^{\star}_{p,t})^2+(\widehat{r}^{\star}_{p,t})^2
\end{bmatrix}.\nonumber
\end{align}
Therefore, following the same lines as in \cite[Appendix B]{dhi21inherent}, the generalization error corresponding to the $t^{th}$ task can be expressed as
\begin{align}\label{GG_A}
{\mathcal{E}}_{p,t,\text{test}} = \frac{1}{4^\vartheta} \mathbb{E}\left[ \bigg( \varphi( c_0 G_1) -\widehat{\varphi}(\widetilde c_{1,t} G_1 +\widetilde c_{2,t} G_2) \bigg)^2 \right].
\end{align}
Here, $G_1$ and $G_2$ are two independent standard Gaussian random variables. Additionally, $ c_0$, $\widetilde c_{1,t}$ and $\widetilde c_{2,t}$ are constants defined as follows
\begin{align}
 &c_0=\frac{1}{\sqrt{ \rho}},~\widetilde c_{1,t}=\widehat{q}^{\star}_{p,t} \sqrt{\frac{\kappa_t}{\alpha_t}}  ,~ \text{and} \nonumber\\
& \widetilde c_{2,t}=\sqrt{ \left(1-\frac{\kappa_t}{\alpha_t}\right) (\widehat{q}^{\star}_{p,t})^2+(\widehat{r}^{\star}_{p,t})^2 }.
\end{align}
Hence, to study the asymptotic properties of the generalization error, it suffices to study the asymptotic properties of the random quantities $\widehat{q}_{p,t}^\star$ and $\widehat{r}_{p,t}^\star$. The following Lemma shows that the sequence of random variables $\widehat{q}_{p,t}^\star$ and $\widehat{r}_{p,t}^\star$ concentrates in the large system limit.
\begin{lemma}[Consistency of the Multi--Task Formulation]\label{optm_sl_conv}
The random quantities $\widehat{q}_{p,t}^\star$ and $\widehat{r}_{p,t}^\star$ satisfies the following asymptotic properties
\begin{align}
\widehat{q}^\star_{p,t} \xrightarrow{p \to \infty} q_t^\star,~\text{and}~\widehat{r}_{p,t}^\star \xrightarrow{p \to \infty} r_t^\star,\nonumber
\end{align}
where $q_t^\star$ and $r_t^\star$ are the optimal solutions of the deterministic formulation introduced in \eqref{det_form}.
\end{lemma}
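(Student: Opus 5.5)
The plan is to combine the MCGMT of Theorem~\ref{mcgmt} with the convergence of the scalar auxiliary problem established above. First, I will define the deterministic cost of \eqref{det_form} after the inner maximization over $\veta$ (subject to $\mC\succ 0$) as $\mathcal{D}(\vq,\vr)$. Likewise, the random cost of \eqref{scalar_form} after maximizing over $\veta$ will be $\mathcal{D}_p(\vq,\vr)$.

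The first step is to show that $\mathcal{D}_p\to\mathcal{D}$ uniformly in probability on compact sets of $(\vq,\vr)$. Pointwise convergence follows from \eqref{fVt}, the convergence $\mB_p\to\mB$, and the weak law of large numbers applied to the Moreau envelopes. The Moreau envelope is jointly convex in its first argument and in the pair (argument, parameter) in a suitable sense. Moreover, the objective is convex in $(\vq,\vr)$ and concave in $\veta$, so pointwise convergence of convex functions upgrades to uniform convergence on compacts (\cite{andersen1982,Newey94}). Passing the convergence through $\max_{\veta}$ requires restricting $\veta$ to a compact subset of the region $\{\mC\succ0\}$. I expect this to be the main obstacle: I must show the optimal $\eta_t$ stay bounded away from the boundary where $\mC$ becomes singular and $V_t$ blows up. I would handle it by showing the objective tends to $-\infty$ there, using the $-\tfrac12\eta_t(q_t^2+r_t^2)$ term and the behavior of $\vq^\top\mB^{-1}\vq$.

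The second step is to establish strict convexity of $\mathcal{D}$ in $(\vq,\vr)$, which the paper asserts, so that $(\vq^\star,\vr^\star)$ is unique. Strict convexity comes from the $\gamma_1$-ridge part. Since $\mathcal{D}_p$ converges uniformly to $\mathcal{D}$, the minimizers of the AO satisfy $(\hat q_{p,t},\hat r_{p,t})\to(q_t^\star,r_t^\star)$ in probability (argmin consistency for convex processes, \cite{Newey94}). It also follows that $\phi_p\to\phi:=\min\mathcal{D}$, which verifies condition 1 of Theorem~\ref{mcgmt}.

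The third step transfers this to the primary problem. For $\epsilon>0$, define the open set
\[
\mathcal{S}_p=\{\vw:\ |\bar{\vxi}_{ts}^\top\vw_t-q_t^\star|<\epsilon,\ \bigl|\norm{\mP_{ts}^\top\vw_t}-r_t^\star\bigr|<\epsilon\}.
\]
On $\mathcal{S}^c_p$, the AO reduces by the same scalarization to minimizing $\mathcal{D}_p$ over the set where $|q_t-q_t^\star|\ge\epsilon$ or $|r_t-r_t^\star|\ge\epsilon$. By the uniform convergence, this converges to $\phi^c=\min_{\text{complement}}\mathcal{D}$, giving condition 2. Strict convexity and uniqueness of the minimizer give $\phi^c\ge\phi+\zeta(\epsilon)$ for some $\zeta(\epsilon)>0$, which is condition 3.

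Theorem~\ref{mcgmt} then yields $\mathbb{P}(\widehat{\vw}\in\mathcal{S}_p)\to1$, that is, $|\widehat q^\star_{p,t}-q_t^\star|<\epsilon$ and $|\widehat r^\star_{p,t}-r_t^\star|<\epsilon$ with probability tending to one. Since $\epsilon$ is arbitrary, this gives convergence in probability. The distributional identity \eqref{b_dist} and the compactness reduction mentioned earlier justify applying the theorem to the original problem.
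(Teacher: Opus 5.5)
Your overall architecture is the paper's. Its proof just points to \cite{dhifallah2020}: uniform convergence via the convexity lemma \cite{andersen1982}, uniqueness from strict convexity, compactness, and argmin consistency via \cite[Theorem~2.1]{Newey94}. The MCGMT transfer, which you spell out with $\mathcal{S}_p$, is left implicit there. But the remedy you propose for what you call the main obstacle fails: the objective of \eqref{det_form} does not tend to $-\infty$ at the singular boundary of $\{\mC\succ0\}$. Near a finite boundary point, $\veta$ stays bounded. The term $\vq^\top\mB^{-1}\vq\le\rho^{-1}\vq^\top\mC\vq$ also stays bounded (for $\rho>0$, since $\mB=(1-\rho)\,\mathrm{diag}(\mC^{-1})+\rho\,\mC^{-1}\succeq\rho\,\mC^{-1}$). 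And $\calM_{\ell(Y_t;.)}(\cdot\,;V_t)\to\inf_x\ell(Y_t;x)$ as $V_t\to\infty$. Worse, the maximizer really does go there. For $T=1$ and the squared loss, $\mC=\eta$ and $\tfrac12\vq^\top\mB^{-1}\vq=\tfrac12\eta q^2$. With $A=r^2+\mathbb{E}[(qS-Y)^2]$, the objective is $\tfrac{\gamma_1}{2}(q^2+r^2)-\tfrac12\eta r^2+\tfrac{\eta A}{2(\eta+\kappa)}$, whose supremum over $\eta>0$ is $\tfrac{\gamma_1}{2}(q^2+r^2)+\tfrac12\bigl(\sqrt{A}-\sqrt{\kappa}\,r\bigr)_+^2$. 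The region $\{\sqrt{A}\le\sqrt{\kappa}\,r\}$ is nonempty as soon as $\kappa>1$, the overparameterized regime the paper studies. On it the supremum is approached only as $\eta\to0^+$, while as $r\to0$ the maximizer $\eta=\sqrt{A\kappa}/r-\kappa$ escapes to $+\infty$. So no compact subset of $\{\mC\succ0\}$ contains the maximizers uniformly over a compact $(\vq,\vr)$-set, and uniform convergence of $\mathcal{D}_p$ on compacts cannot be obtained this way.

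Relatedly, the objective is indeed concave in $\veta$, but for fixed $\veta$ it is not convex in $(\vq,\vr)$. Above, the coefficient of $r^2/2$ is $\gamma_1-\eta+\eta/(\eta+\kappa)<0$ once $\eta>\gamma_1+1$. So neither the convexity lemma nor ``strict convexity from the $\gamma_1$ part'' can be applied term by term; convexity in $(\vq,\vr)$ appears only after maximizing over $\veta$, as the closed form shows.

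The repair is to prove that $\mathcal{D}_p$ and $\mathcal{D}$ are convex (as the paper asserts) and then localize. If a convex $f$ satisfies $\min_{\partial N}f>f(\vq^\star,\vr^\star)$ on a small convex neighborhood $N$ of $(\vq^\star,\vr^\star)$, then its minimizers lie in $N$ and $\inf_{N^c}f\ge\min_{\partial N}f$. You then need convergence only on $\bar N$, where $r_t>0$ and the maximizing $\veta$ stays in a compact set around an interior $\veta^\star$ (in the example both follow from $\gamma_1>0$). Concavity in $\veta$ lets you localize $\veta$ in the same way. Uniform convergence of the objective on such compacts follows from the joint convexity of $(a,b)\mapsto\calM_{\ell(y;.)}(a;b)$, together with the exact identities $\mC_p=\mC\otimes\mI_k$ and $V_{p,t}(\veta)=C^{-1}_{tt}\norm{\vg_t}^2/n_t$.

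Finally, on the nonconvex set $\mathcal{S}^c_p$ the min--max interchanges behind the scalarization yield only a lower bound $\phi^c_p\ge\inf\mathcal{D}_p$ over the corresponding $(\vq,\vr)$-region, not convergence. That lower bound is all the separation argument needs.
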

The proof of Lemma \ref{optm_sl_conv} follows using the same steps of the theoretical result in \cite[Proposition 5]{dhifallah2020}. Specifically, the proof uses the results proved in \cite[Theorem 2.1]{Newey94} which requires the uniform convergence, the strict convexity of the cost function and the compactness of the feasibility sets of the deterministic formulation in \eqref{det_form}. The uniform convergence can be verified using the result in \cite[Theorem II.1]{andersen1982}, the compactness of the feasibility sets and the strict convexity properties of \eqref{det_form}. Based on the above analysis, these assumptions are all satisfied for the formulations in \eqref{scalar_form} and \eqref{det_form}.


\subsubsection{Specializing the Results to Theorem~\ref{lem_same}}
Now, that we have proven Theorem~\ref{th_gmtask}, we turn our attention towards Theorem~\ref{lem_same}, which is a special case of Theorem~\ref{th_gmtask}, with $\alpha_t=\alpha, \forall t\in\lbrace 1,\dots,T \rbrace$.
Under this condition, it can be easily seen that we have symmetric optimization problems, i.e., $q_t =q, r_t=r, ~\text{and}~ \eta_t =\eta,~ \forall t$. Then, one can simplify the expressions in \eqref{det_form}, and \eqref{GG_A}, using straightforward algebraic manipulations to arrive at the results in \eqref{det_form_asy}, and \eqref{gen_conv}, respectively.


\bibliographystyle{IEEEbib}
\bibliography{Main_Arxiv1}
\end{document}